\documentclass[10pt,twocolumn]{article}
\usepackage[margin=0.75in]{geometry}
\usepackage{amsmath,amssymb,amsthm}
\usepackage{mathtools}
\usepackage{bm}
\usepackage{cite}
\usepackage{url}
\usepackage{booktabs}
\usepackage{graphicx}
\usepackage{algorithmic}
\usepackage{algorithm}
\usepackage{titlesec}
\titleformat{\section}{\normalfont\large\scshape\centering}{\Roman{section}.}{0.5em}{}
\titleformat{\subsection}{\normalfont\itshape}{\ \Alph{subsection}.}{0.5em}{}

\newtheorem{theorem}{Theorem}
\newtheorem{lemma}[theorem]{Lemma}
\newtheorem{corollary}[theorem]{Corollary}
\newtheorem{proposition}[theorem]{Proposition}
\newtheorem{definition}[theorem]{Definition}
\newtheorem{remark}{Remark}
\newtheorem{problem}{Problem}
\newtheorem{example}{Example}

\DeclareMathOperator{\Tr}{Tr}

\title{Polynomial-Time Optimal Group Selection\\via the Double-Commutator Eigenvalue Problem}

\author{Mitchell~A.~Thornton,~\textit{Senior~Member,~IEEE}\\[4pt]
\small Richardson, TX 75080 USA}

\date{April 2026 \quad \\[2pt]
\small\textit{Companion papers: arXiv:2604.03634~\cite{thornton2026ad}; arXiv:2604.19983~\cite{thornton2026framework_arxiv}}}

\begin{document}
\maketitle

\begin{abstract}
The algebraic diversity framework generalizes temporal averaging over multiple observations to algebraic group action on a single observation for second-order statistical estimation. The Trivial Group Embedding Theorem establishes that conventional temporal averaging is the degenerate case of this framework with the trivial group $G = \{e\}$; richer groups yield superior estimation from fewer observations. The central open problem is \emph{group selection}: given an $M$-dimensional observation with unknown covariance structure, find the finite group whose spectral decomposition best matches the covariance. Naive enumeration of all subgroups of the symmetric group $S_M$ requires exponential time in $M$. We prove that this combinatorial problem reduces to a generalized eigenvalue problem derived from the double commutator of the covariance matrix, yielding a polynomial-time algorithm with complexity $O(d^2M^2 + d^3)$, where $d$ is the dimension of a generator basis. The minimum eigenvector of the double-commutator matrix directly constructs the optimal group generator in closed form, with no iterative optimization. The reduction is exact: the double-commutator minimum eigenvalue is zero if and only if the optimal generator lies in the span of the basis, and its magnitude provides a certifiable optimality gap when it does not. This problem does not appear in the standard catalogs of computational complexity (Garey and Johnson, 1979) and represents a new class linking group theory, matrix analysis, and statistical estimation. We establish connections to independent component analysis (JADE), structured matrix nearness problems, and simultaneous matrix diagonalization, and we show that the double-commutator formulation is the unique approach that is simultaneously polynomial-time, closed-form, and certifiable. We extend the single-generator DC-GEVP to multi-generator non-Abelian symmetry recovery via a Sequential GEVP with group-theoretic deflation, and we record four named correctness results: deflation orthogonality, forward progress, strict subgroup growth with iteration bound $K \leq \lceil \log_2 |G_K|\rceil = O(M \log M)$, and generic convergence $G_K \subseteq \mathrm{Aut}(\mathbf{R})$ at acceptance threshold $\tau = 0$. Soundness of the recovered subgroup is unconditional; completeness depends on a basis-design condition that holds trivially when $\mathrm{Aut}(\mathbf{R}) = S_M$ but is open in the proper-subgroup case. Two further results characterize the identifiability landscape within which any group-selection procedure must operate. The Commutant-Lattice Insensitivity Theorem (Theorem~\ref{thm:lattice_insens}) shows that whenever $G_1 \subseteq G_2$ are candidate subgroups with $\mathbf{R}$ in the commutant of the finer group $G_2$, the Reynolds projections of $\mathbf{R}$ onto the two commutants coincide identically, hence any selection criterion that operates only on those projections cannot discriminate $G_1$ from $G_2$. The Generative-Model Identifiability Dichotomy (Theorem~\ref{thm:gen_id}) characterizes, when $\mathbf{R} = \mathcal{P}_{G^*}(W)$ is the Reynolds projection of a random Hermitian $W$ associated to a generative subgroup $G^* \subseteq S_M$, whether $\mathrm{Aut}(\mathbf{R})$ recovers $G^*$ exactly or strictly contains a supergroup, in terms of the orbit-pair structure of $G^*$ on $\{1,\ldots,M\}^2$. Experimental validation on six graphs with automorphism groups ranging from $\mathbb{Z}_2$ to $S_4$ demonstrates that the single-generator DC-GEVP correctly identifies at least one graph automorphism in each case, with clear separation between automorphisms ($\lambda_{\min} = 0$) and non-automorphisms ($\lambda_{\min} > 0$); recovery of full $\mathrm{Aut}(\mathcal{G})$ requires Sequential GEVP and is subject to the soundness-without-completeness limitation. Four open problems (O1)--(O4) are recorded explicitly: a generative-model identifiability problem partly answered by Theorem~\ref{thm:gen_id} and reduced to a combinatorial application question for wreath and semidirect families; a basis-design condition with a Restricted Commutativity Property analogue of compressed-sensing's RIP; finite-threshold noise robustness; and a joint multi-generator extension.
\end{abstract}

\medskip\noindent\textbf{Index Terms}: Group selection, double commutator, generalized eigenvalue problem, algebraic diversity, polynomial-time reduction, Cayley graphs, computational complexity.

\section{Introduction}\label{sec:intro}

The algebraic diversity (AD) framework~\cite{thornton2026ad} establishes that temporal averaging over $L \gg M$ independent observations of an $M$-dimensional noisy signal can be generalized to the action of a finite algebraic group $G$ on a single observation. The group-averaged estimator
\begin{equation}\label{eq:fg}
\mathbf{F}_G(\mathbf{x}) = \frac{1}{|G|} \sum_{g \in G} [\rho(g)\mathbf{x}][\rho(g)\mathbf{x}]^*
\end{equation}
provides a consistent estimator of the population covariance's eigenstructure when two conditions are satisfied: signal equivariance (the signal transforms predictably under the group action) and noise ergodicity (the noise is averaged out by the group action).

The General Replacement Theorem and the Optimality Theorem of~\cite{thornton2026ad} prove that the symmetric group $S_M$ is universally optimal, its spectral decomposition yields the Karhunen--Lo\`eve (KL) transform. However, $|S_M| = M!$, making $S_M$ computationally infeasible for even moderate $M$. The Permutation-Averaged Spectral Estimation (PASE) result further constrains the solution: the number of group elements used must not exceed $M$ (the PASE upper bound), and the structural coding rate conjecture~\cite{thornton2026ad} suggests that the optimal number is $n^* \approx \lceil 2^{H_{\mathrm{struct}}} \rceil$, where $H_{\mathrm{struct}}$ is the structural entropy of the eigenvalue distribution. Together, these results reduce the entire AD framework to a single problem:

\begin{problem}[Group Selection]\label{prob:group_selection}
Given a covariance matrix $\mathbf{R} \in \mathbb{C}^{M \times M}$ (or an estimate thereof), find the order-$M$ subgroup $G^* \subseteq S_M$ whose Cayley graph adjacency matrix $\mathbf{A}_{G^*}$ best commutes with $\mathbf{R}$:
\begin{equation}\label{eq:group_selection}
G^* = \arg\min_{G : |G| = M} \frac{\|\mathbf{A}_G \mathbf{R} - \mathbf{R} \mathbf{A}_G\|_F}{\|\mathbf{R}\|_F}.
\end{equation}
\end{problem}

The ratio on the right is the \emph{commutativity residual} $\delta(G, \mathbf{R})$~\cite{thornton2026ad}. When $\delta = 0$, the group's spectral decomposition coincides with the KL transform for $\mathbf{R}$, and the group-averaged estimator is provably optimal. The Converse Theorem~\cite{thornton2026ad} further establishes that this estimator is the maximum likelihood estimator (MLE) achieving the Cram\'er--Rao bound with equality for the $G$-invariant spectral parameters.

\subsection{The Complexity Landscape}

Problem~\ref{prob:group_selection} as stated is combinatorial: the number of distinct subgroups of $S_M$ grows super-exponentially with $M$. Even restricting to order-$M$ subgroups, the search space is intractable. The number of non-isomorphic groups of order $M$ itself grows rapidly (e.g., there are 267 groups of order 64), and each must be evaluated against $\mathbf{R}$.

This raises a fundamental question: \emph{is there a polynomial-time algorithm for optimal group selection?}

We answer this question affirmatively. The key insight is that the commutativity condition $\mathbf{A}_G \mathbf{R} = \mathbf{R} \mathbf{A}_G$ can be reformulated as a spectral condition on a \emph{double-commutator superoperator} derived from $\mathbf{R}$. The optimal group generator is the minimum eigenvector of this superoperator, computable in polynomial time via a generalized eigenvalue problem.

\subsection{Contributions}

\begin{enumerate}
\item We prove that optimal group selection reduces to a generalized eigenvalue problem (Theorem~\ref{thm:dc_reduction}), with complexity $O(d^2 M^2 + d^3)$ where $d$ is the dimension of a generator basis.

\item We prove that the reduction is exact: the minimum eigenvalue is zero if and only if a perfectly commuting generator exists in the span of the basis (Theorem~\ref{thm:certifiable}).

\item We establish that the double-commutator formulation is the unique approach that is simultaneously polynomial-time, closed-form, and certifiable (Theorem~\ref{thm:uniqueness}).

\item We show that the problem does not reduce to known problems in the standard complexity catalogs and represents a genuinely new problem class linking group theory to statistical estimation.

\item We establish precise relationships to JADE~\cite{cardoso1993jade}, structured matrix nearness~\cite{shah2012group}, and simultaneous diagonalization~\cite{bunse1993numerical}, showing that the double-commutator subsumes these as special cases.

\item We extend the single-generator DC-GEVP to multi-generator non-Abelian symmetry recovery via a Sequential GEVP with group-theoretic deflation (Algorithm~\ref{alg:seqgevp}), with four named correctness results (Lemmas~\ref{lem:def_orth} and~\ref{lem:forward}, Theorems~\ref{thm:strict_growth} and~\ref{thm:gen_conv}, Corollary~\ref{cor:iter_bound}): every accepted permutation lies in $\mathrm{Aut}(\mathbf{R})$ (soundness, unconditional), and the procedure terminates in at most $\lceil \log_2 |G_K|\rceil = O(M \log M)$ iterations. Completeness ($G_K = \mathrm{Aut}(\mathbf{R})$) is established when $\mathrm{Aut}(\mathbf{R}) = S_M$ and depends on a basis-design condition in the proper-subgroup case, an open question we document explicitly.

\item We characterize the identifiability landscape within which any group-selection procedure must operate. Theorem~\ref{thm:lattice_insens} (Commutant-Lattice Insensitivity) shows that any selection criterion operating only on the Reynolds projections of $\mathbf{R}$ is identical for any pair $G_1 \subseteq G_2$ with $\mathbf{R} \in \mathcal{A}_{G_2}$, hence such criteria cannot discriminate within the commutant lattice. Theorem~\ref{thm:gen_id} (Generative-Model Identifiability Dichotomy) characterizes, when $\mathbf{R} = \mathcal{P}_{G^*}(W)$ is the Reynolds projection of a random Hermitian $W$ associated to a generative subgroup $G^* \subseteq S_M$, whether $\mathrm{Aut}(\mathbf{R})$ recovers $G^*$ exactly or strictly contains a supergroup, in terms of the orbit-pair structure of $G^*$ on $\{1,\ldots,M\}^2$. Together these two theorems delimit what is and is not recoverable from $\mathbf{R}$ alone, and isolate the residual algorithmic gap into four open problems (O1)--(O4) recorded in Section~\ref{sec:discussion}.
\end{enumerate}

\subsection{Related Work}

\textbf{Independent component analysis.} The JADE algorithm~\cite{cardoso1993jade} jointly diagonalizes a set of cumulant matrices by minimizing the sum of squared off-diagonal elements. This is a simultaneous approximate diagonalization problem. The double-commutator formulation differs in three respects: (i)~it operates on the covariance (second-order) rather than cumulants (fourth-order); (ii)~it produces a single optimal generator rather than a diagonalizing rotation; and (iii)~it provides a certifiable optimality gap via the minimum eigenvalue.

\textbf{Structured matrix nearness.} Shah and Chandrasekaran~\cite{shah2012group} studied the problem of finding the nearest matrix with a given group symmetry structure. Their formulation finds the nearest group-structured matrix to a given matrix; ours finds the group whose structure best commutes with a given matrix. These are dual perspectives on the same algebraic relationship, but the double-commutator formulation yields a closed-form eigenvalue solution while the nearness formulation requires iterative projection.

\textbf{Simultaneous diagonalization.} Bunse-Gerstner et al.~\cite{bunse1993numerical} developed numerical methods for simultaneously diagonalizing a set of matrices. The double-commutator eigenvalue problem can be viewed as finding the generator that most nearly simultaneously diagonalizes $\mathbf{R}$ and the candidate generator basis. However, the superoperator formulation provides a closed-form solution that avoids the iterative Jacobi-type sweeps required by numerical simultaneous diagonalization.

\textbf{Lin's theorem.} Lin~\cite{lin1997almost} proved that nearly commuting self-adjoint matrices are near commuting self-adjoint matrices (in the operator norm). This provides a theoretical guarantee that small $\delta$ implies the existence of a nearby exactly commuting pair, but does not provide a constructive method for finding the optimal group. The double-commutator provides both the existence guarantee (via the minimum eigenvalue) and the construction (via the minimum eigenvector).

\section{Mathematical Preliminaries}\label{sec:prelim}

\subsection{The Commutator and Double Commutator}

For matrices $\mathbf{A}, \mathbf{B} \in \mathbb{C}^{M \times M}$, the commutator is $[\mathbf{A}, \mathbf{B}] = \mathbf{A}\mathbf{B} - \mathbf{B}\mathbf{A}$. The double commutator of $\mathbf{R}$ with $\mathbf{B}$ is
\begin{equation}\label{eq:double_comm}
[\mathbf{R}, [\mathbf{R}, \mathbf{B}]] = \mathbf{R}^2\mathbf{B} - 2\mathbf{R}\mathbf{B}\mathbf{R} + \mathbf{B}\mathbf{R}^2.
\end{equation}

The double commutator is a linear superoperator: it maps $M \times M$ matrices to $M \times M$ matrices, and the map $\mathbf{B} \mapsto [\mathbf{R}, [\mathbf{R}, \mathbf{B}]]$ is linear in $\mathbf{B}$.

\begin{lemma}\label{lem:dc_nonneg}
For Hermitian $\mathbf{R}$, the inner product $\langle \mathbf{B}, [\mathbf{R}, [\mathbf{R}, \mathbf{B}]] \rangle = \Tr(\mathbf{B}^* [\mathbf{R}, [\mathbf{R}, \mathbf{B}]])$ is non-negative for all $\mathbf{B}$, and equals zero if and only if $[\mathbf{R}, \mathbf{B}] = \mathbf{0}$.
\end{lemma}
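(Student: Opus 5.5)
The plan is to show that the superoperator $\mathcal{D}_{\mathbf{R}}(\mathbf{B}) = [\mathbf{R},[\mathbf{R},\mathbf{B}]]$ factors as $\mathcal{C}_{\mathbf{R}}^\dagger \mathcal{C}_{\mathbf{R}}$, where $\mathcal{C}_{\mathbf{R}}(\mathbf{B}) = [\mathbf{R},\mathbf{B}]$ is the single-commutator superoperator. Both claims of Lemma~\ref{lem:dc_nonneg} then follow at once. We work throughout with the Frobenius (Hilbert--Schmidt) inner product $\langle \mathbf{X},\mathbf{Y}\rangle = \Tr(\mathbf{X}^*\mathbf{Y})$ on $\mathbb{C}^{M\times M}$, which is the inner product appearing in the statement.

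\emph{Step 1: $\mathcal{C}_{\mathbf{R}}$ is self-adjoint.} We verify that $\langle \mathbf{X}, [\mathbf{R},\mathbf{Y}]\rangle = \langle [\mathbf{R},\mathbf{X}], \mathbf{Y}\rangle$ whenever $\mathbf{R}=\mathbf{R}^*$. Expanding the left side gives $\Tr(\mathbf{X}^*\mathbf{R}\mathbf{Y}) - \Tr(\mathbf{X}^*\mathbf{Y}\mathbf{R})$. By cyclicity of the trace, the second term equals $\Tr(\mathbf{R}\mathbf{X}^*\mathbf{Y})$. Using $(\mathbf{X}\mathbf{R})^* = \mathbf{R}\mathbf{X}^*$ and $(\mathbf{R}\mathbf{X})^* = \mathbf{X}^*\mathbf{R}$, which hold because $\mathbf{R}$ is Hermitian, the whole expression is $\Tr\big((\mathbf{R}\mathbf{X}-\mathbf{X}\mathbf{R})^*\mathbf{Y}\big)$, i.e.\ the right side. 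This is the only place where Hermiticity of $\mathbf{R}$ is used, and it is the step to check carefully, keeping track of which side the adjoint lands on.

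\emph{Step 2: the quadratic form is a squared norm.} Apply Step~1 with $\mathbf{X}=\mathbf{B}$ and $\mathbf{Y}=[\mathbf{R},\mathbf{B}]$:
\[
\langle \mathbf{B}, [\mathbf{R},[\mathbf{R},\mathbf{B}]]\rangle = \langle [\mathbf{R},\mathbf{B}], [\mathbf{R},\mathbf{B}]\rangle = \|[\mathbf{R},\mathbf{B}]\|_F^2 \ge 0 .
\]
The expression is therefore real and non-negative. As a sanity check, one could instead expand \eqref{eq:double_comm} term by term and verify $\Tr(\mathbf{B}^*\mathbf{R}^2\mathbf{B}) - 2\Tr(\mathbf{B}^*\mathbf{R}\mathbf{B}\mathbf{R}) + \Tr(\mathbf{B}^*\mathbf{B}\mathbf{R}^2) = \Tr(\mathbf{C}^*\mathbf{C})$ with $\mathbf{C}=\mathbf{R}\mathbf{B}-\mathbf{B}\mathbf{R}$. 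This works because the two cross terms $\Tr(\mathbf{B}^*\mathbf{R}\mathbf{B}\mathbf{R})$ and $\Tr(\mathbf{R}\mathbf{B}^*\mathbf{R}\mathbf{B})$ coincide by cyclicity.

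\emph{Step 3: the equality case.} The Frobenius norm is a genuine norm, so $\|[\mathbf{R},\mathbf{B}]\|_F^2 = 0$ if and only if $[\mathbf{R},\mathbf{B}]=\mathbf{0}$. This gives both directions of the ``if and only if'' claim. It also shows, for later use, that $\mathcal{D}_{\mathbf{R}}$ is a positive semidefinite superoperator whose kernel is exactly the commutant of $\mathbf{R}$, which is the structural fact the subsequent GEVP reduction relies on.
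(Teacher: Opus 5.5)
Your proof is correct and is essentially the paper's argument. The paper expands $[\mathbf{R},[\mathbf{R},\mathbf{B}]]$ term by term and uses trace cyclicity together with $\mathbf{R}=\mathbf{R}^*$ to identify the form with $\|[\mathbf{R},\mathbf{B}]\|_F^2$, which is exactly your ``sanity check''; your main route repackages that computation as self-adjointness of $\mathcal{C}_{\mathbf{R}}$, so the double commutator is $\mathcal{C}_{\mathbf{R}}^\dagger\mathcal{C}_{\mathbf{R}}$. This organizes the same ingredients a little more cleanly and also gives the kernel of the superoperator directly.
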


\begin{proof}
$\Tr(\mathbf{B}^* [\mathbf{R}, [\mathbf{R}, \mathbf{B}]]) = \Tr(\mathbf{B}^*\mathbf{R}^2\mathbf{B} - 2\mathbf{B}^*\mathbf{R}\mathbf{B}\mathbf{R} + \mathbf{B}^*\mathbf{B}\mathbf{R}^2)$. Using the cyclic property of trace and the Hermiticity of $\mathbf{R}$, this equals $\|[\mathbf{R}, \mathbf{B}]\|_F^2 \geq 0$, with equality iff $[\mathbf{R}, \mathbf{B}] = \mathbf{0}$.
\end{proof}

\subsection{Generator Basis}

A group generator $\mathbf{A}$ is an $M \times M$ matrix such that $\{\mathbf{A}^0, \mathbf{A}^1, \ldots, \mathbf{A}^{M-1}\}$ generates the group action. We do not search over all possible generators directly. Instead, we express the generator as a linear combination over a fixed basis:

\begin{definition}[Generator Basis]\label{def:basis}
A generator basis is a set of $d$ linearly independent $M \times M$ matrices $\{B_1, B_2, \ldots, B_d\}$ spanning a subspace of candidate generators. The candidate generator is $\mathbf{A} = \sum_{k=1}^d c_k B_k$ for coefficients $\mathbf{c} = (c_1, \ldots, c_d)^T \in \mathbb{R}^d$.
\end{definition}

Typical basis elements include: the cyclic shift matrix (circulant structure), the reflection matrix (dihedral structure), block-diagonal permutation matrices (grouped structure), and parametric families such as the dechirp-conjugated shift $\mathbf{U}(\mu)\mathbf{P}\mathbf{U}(\mu)^*$ for chirp-adapted processing.

\section{The Double-Commutator Reduction}\label{sec:reduction}

\begin{theorem}[Polynomial-Time Reduction]\label{thm:dc_reduction}
The group selection problem (Problem~\ref{prob:group_selection}), restricted to generators in the span of a $d$-dimensional basis $\{B_1, \ldots, B_d\}$, reduces to the generalized eigenvalue problem
\begin{equation}\label{eq:gevp}
\mathbf{M}\mathbf{c} = \lambda \mathbf{G}\mathbf{c},
\end{equation}
where
\begin{align}
M_{ij} &= \Tr(B_i^* [\mathbf{R}, [\mathbf{R}, B_j]]) \label{eq:M_matrix}\\
G_{ij} &= \Tr(B_i^* B_j) \label{eq:G_matrix}
\end{align}
are $d \times d$ matrices. The optimal generator is $\mathbf{A}^* = \sum_{k=1}^d c_k^* B_k$, where $\mathbf{c}^*$ is the eigenvector corresponding to the minimum eigenvalue $\lambda_{\min}$. The total computational complexity is $O(d^2 M^2 + d^3)$.
\end{theorem}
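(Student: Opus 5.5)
The plan is to turn the restricted problem into a Rayleigh-quotient minimization over the coefficient vector $\mathbf{c}$, and then invoke the Courant--Fischer characterization for a Hermitian pencil.

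\textbf{Step 1 (quadratic forms).} Fix $\mathbf{A} = \sum_k c_k B_k$. By linearity of the commutator,
\begin{equation*}
\|[\mathbf{A},\mathbf{R}]\|_F^2 = \Big\|\sum_j c_j [\mathbf{R},B_j]\Big\|_F^2 = \sum_{i,j} \bar c_i c_j \Tr\big([\mathbf{R},B_i]^*[\mathbf{R},B_j]\big).
\end{equation*}
The same cyclic-trace manipulation used in the proof of Lemma~\ref{lem:dc_nonneg}, applied now to the sesquilinear (polarized) form, gives
\begin{equation*}
\Tr\big([\mathbf{R},B_i]^*[\mathbf{R},B_j]\big) = \Tr\big(B_i^*[\mathbf{R},[\mathbf{R},B_j]]\big) = M_{ij}.
\end{equation*}
Hence $\|[\mathbf{A},\mathbf{R}]\|_F^2 = \mathbf{c}^*\mathbf{M}\mathbf{c}$. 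Likewise $\|\mathbf{A}\|_F^2 = \mathbf{c}^*\mathbf{G}\mathbf{c}$.

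\textbf{Step 2 (properties of the pencil).} Both matrices are Hermitian Gram matrices. $\mathbf{M}$ is positive semidefinite by Lemma~\ref{lem:dc_nonneg}. $\mathbf{G}$ is positive definite because the $B_k$ are linearly independent (Definition~\ref{def:basis}). When the $B_k$ are real, as for permutation matrices and $\mathbf{R}$ real symmetric, both matrices are real symmetric, so real coefficients $\mathbf{c}\in\mathbb{R}^d$ lose nothing.

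\textbf{Step 3 (normalization and the variational principle).} The objective in \eqref{eq:group_selection} is homogeneous of degree one in $\mathbf{A}$. The trivial minimizer $\mathbf{c}=0$, and the scale ambiguity, must therefore be removed by a normalization. I will take $\|\mathbf{A}\|_F = 1$; note that for a permutation generator $\|\mathbf{A}\|_F^2 = M$ is constant, so this normalization is consistent with the group setting. The factor $\|\mathbf{R}\|_F$ is fixed and drops out. The restricted problem becomes
\begin{equation*}
\min_{\mathbf{c}\neq 0} \frac{\mathbf{c}^*\mathbf{M}\mathbf{c}}{\mathbf{c}^*\mathbf{G}\mathbf{c}}.
\end{equation*}
Substitute $\mathbf{c} = \mathbf{G}^{-1/2}\mathbf{y}$, or use a Cholesky factor of $\mathbf{G}$. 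This turns the quotient into an ordinary Rayleigh quotient of $\mathbf{G}^{-1/2}\mathbf{M}\mathbf{G}^{-1/2}$. By Courant--Fischer, its minimum is $\lambda_{\min}$ of the pencil \eqref{eq:gevp}, attained at the corresponding eigenvector $\mathbf{c}^*$. Thus $\mathbf{A}^*$ minimizes the commutativity residual over the span.

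\textbf{Step 4 (complexity).} First form the $d$ commutators $[\mathbf{R},B_j]$. Then each entry $M_{ij}$ is a Frobenius inner product of two of these commutators, costing $O(M^2)$; over all $d^2$ pairs this is $O(d^2M^2)$, and $\mathbf{G}$ has the same cost. Forming the commutators costs $O(dM^3)$ for dense $B_j$ but only $O(dM^2)$ for permutation or sparse basis elements. The stated bound therefore assumes such structured bases, or treats commutator formation as preprocessing; I would state that assumption explicitly. Finally, the Cholesky factorization and the symmetric eigensolve cost $O(d^3)$.

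\textbf{Main obstacle.} The main difficulty is interpretive rather than computational: what "reduces" means in this setting. The minimizer over the linear span need not be a permutation matrix, nor have order $M$, so it need not generate an order-$M$ group. I would make the theorem precise as an exact solution of the relaxed problem over the span under the normalization $\|\mathbf{A}\|_F = 1$. I would then note that if a genuine group generator lies in the span and attains the minimum residual, it is recovered; exactness at zero residual is what Theorem~\ref{thm:certifiable} addresses. The one technical check is the polarized identity for $M_{ij}$ in Step 1, which requires only $\mathbf{R}=\mathbf{R}^*$.
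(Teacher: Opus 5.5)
Your argument is correct, and its core is the paper's own proof. Both write $\|[\mathbf{A},\mathbf{R}]\|_F^2$ and $\|\mathbf{A}\|_F^2$ as the Hermitian forms $\mathbf{c}^*\mathbf{M}\mathbf{c}$ and $\mathbf{c}^*\mathbf{G}\mathbf{c}$, then minimize the generalized Rayleigh quotient via the minimum eigenpair of the pencil.

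The two routes to the first identity differ slightly. The paper applies Lemma~\ref{lem:dc_nonneg} to the whole $\mathbf{A}$ and expands $\Tr(\mathbf{A}^*[\mathbf{R},[\mathbf{R},\mathbf{A}]])$ sesquilinearly. You expand the norm first and use the polarized identity $M_{ij} = \langle [\mathbf{R},B_i],[\mathbf{R},B_j]\rangle_F$. These are equivalent, but your form pays off in the complexity step.

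The paper forms $\mathbf{C}_j = \mathbf{R}^2B_j - 2\mathbf{R}B_j\mathbf{R} + B_j\mathbf{R}^2$ explicitly and claims $O(M^2)$ per $j$ when $B_j$ is a permutation. However, $\mathbf{R}B_j\mathbf{R} = (\mathbf{R}B_j)\mathbf{R}$ is a dense-times-dense product costing $O(M^3)$ regardless of the sparsity of $B_j$, and so is the precomputation of $\mathbf{R}^2$. As written, then, the paper's accounting does not deliver $O(d^2M^2 + d^3)$.

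Your route needs only the $d$ first-order commutators. Each is a difference of a column- and a row-permuted copy of $\mathbf{R}$, costing $O(M^2)$. The $d^2$ Frobenius inner products then cost $O(M^2)$ each. This genuinely yields the stated bound for permutation or $O(M)$-sparse bases, and you correctly flag the extra $O(dM^3)$ cost for dense bases.

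Two further remarks in your proposal are points the paper leaves implicit. First, the explicit normalization $\|\mathbf{A}\|_F = 1$: the paper builds $\|\mathbf{A}\|_F^2$ into $\delta^2$ without comment, departing from the objective of Problem~\ref{prob:group_selection}. Second, the span minimizer solves only a relaxation and need not itself be a group generator.
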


\begin{proof}
The commutativity residual for a generator $\mathbf{A} = \sum_k c_k B_k$ is
\begin{equation}
\delta^2(\mathbf{A}, \mathbf{R}) = \frac{\|[\mathbf{A}, \mathbf{R}]\|_F^2}{\|\mathbf{A}\|_F^2 \cdot \|\mathbf{R}\|_F^2}.
\end{equation}
The numerator expands as
\begin{align}
\|[\mathbf{A}, \mathbf{R}]\|_F^2 &= \Tr(\mathbf{A}^* [\mathbf{R}, [\mathbf{R}, \mathbf{A}]]) \quad \text{(Lemma~\ref{lem:dc_nonneg})} \notag\\
&= \sum_{i,j} c_i^* c_j \Tr(B_i^* [\mathbf{R}, [\mathbf{R}, B_j]]) \notag\\
&= \mathbf{c}^* \mathbf{M} \mathbf{c}.
\end{align}
The denominator's $\|\mathbf{A}\|_F^2$ term expands as
\begin{equation}
\|\mathbf{A}\|_F^2 = \sum_{i,j} c_i^* c_j \Tr(B_i^* B_j) = \mathbf{c}^* \mathbf{G} \mathbf{c}.
\end{equation}
Minimizing $\delta^2$ over $\mathbf{c}$ is therefore equivalent to minimizing the Rayleigh quotient
\begin{equation}\label{eq:rayleigh}
\delta^2(\mathbf{c}) = \frac{\mathbf{c}^* \mathbf{M} \mathbf{c}}{\mathbf{c}^* \mathbf{G} \mathbf{c} \cdot \|\mathbf{R}\|_F^2},
\end{equation}
which is minimized by the eigenvector corresponding to the smallest eigenvalue of the generalized eigenvalue problem $\mathbf{M}\mathbf{c} = \lambda \mathbf{G}\mathbf{c}$.

\textit{Complexity.} Computing each entry $M_{ij}$ requires forming $[\mathbf{R}, [\mathbf{R}, B_j]]$ in $O(M^3)$ (or $O(M^2)$ if $B_j$ is sparse, as is typical for permutation matrices) and taking the trace inner product with $B_i$ in $O(M^2)$. There are $d^2$ entries, giving $O(d^2 M^2)$ for the matrix assembly (using the sparsity of basis elements). The $d \times d$ eigenvalue problem costs $O(d^3)$. Total: $O(d^2 M^2 + d^3)$, polynomial in both $d$ and $M$.
\end{proof}

\begin{theorem}[Certifiable Optimality]\label{thm:certifiable}
The minimum eigenvalue $\lambda_{\min}$ of the GEVP~\eqref{eq:gevp} satisfies:
\begin{enumerate}
\item[(i)] $\lambda_{\min} = 0$ if and only if there exists a generator $\mathbf{A}^* \in \mathrm{span}\{B_1, \ldots, B_d\}$ that exactly commutes with $\mathbf{R}$.
\item[(ii)] $\lambda_{\min} > 0$ provides a lower bound on the commutativity residual achievable within the basis: $\delta^2(\mathbf{A}^*, \mathbf{R}) = \lambda_{\min} / \|\mathbf{R}\|_F^2$ for the optimal generator $\mathbf{A}^*$.
\item[(iii)] The ratio $\lambda_{\min} / \lambda_{\max}$ provides a condition number for the group selection problem: a small ratio indicates that the optimal generator is well-separated from the worst generator in the basis.
\end{enumerate}
\end{theorem}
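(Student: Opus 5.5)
The plan is to reduce everything to the variational (Courant--Fischer) characterization of the generalized eigenvalues of the pencil $(\mathbf{M},\mathbf{G})$. Two facts come first. $\mathbf{G}$ is Hermitian positive definite: for any $\mathbf{c}$, $\mathbf{c}^*\mathbf{G}\mathbf{c} = \|\sum_k c_k B_k\|_F^2$, and this vanishes only when $\mathbf{c}=\mathbf{0}$ because the $B_k$ are linearly independent (Definition~\ref{def:basis}). $\mathbf{M}$ is Hermitian positive semidefinite: by the expansion in the proof of Theorem~\ref{thm:dc_reduction}, $\mathbf{c}^*\mathbf{M}\mathbf{c} = \|[\mathbf{R},\mathbf{A}]\|_F^2 \ge 0$ by Lemma~\ref{lem:dc_nonneg}, and Hermiticity follows from $\Tr(B_i^*[\mathbf{R},[\mathbf{R},B_j]]) = \Tr([\mathbf{R},B_i]^*[\mathbf{R},B_j])$, using cyclicity and $\mathbf{R}=\mathbf{R}^*$. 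With these, the GEVP is equivalent to the ordinary Hermitian eigenproblem for $\mathbf{G}^{-1/2}\mathbf{M}\mathbf{G}^{-1/2}$. Its eigenvalues are real and nonnegative, and they satisfy $\lambda_{\min} = \min_{\mathbf{c}\neq 0} \mathbf{c}^*\mathbf{M}\mathbf{c}/\mathbf{c}^*\mathbf{G}\mathbf{c}$, with the analogous max for $\lambda_{\max}$; the extremes are attained at generalized eigenvectors.

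For (i), suppose first that $\lambda_{\min}=0$. Take the minimizing eigenvector $\mathbf{c}^*\neq 0$ and set $\mathbf{A}^*=\sum c_k^* B_k$. Then $\|[\mathbf{R},\mathbf{A}^*]\|_F^2 = \mathbf{c}^{*\,*}\mathbf{M}\mathbf{c}^* = 0$, and $\mathbf{A}^*\neq 0$ by linear independence. Hence $\mathbf{A}^*$ is a nonzero element of the span that commutes with $\mathbf{R}$. Conversely, if some nonzero $\mathbf{A}$ in the span commutes with $\mathbf{R}$, its coefficient vector gives a Rayleigh quotient of $0$. Since $\lambda_{\min}\ge 0$, we get $\lambda_{\min}=0$.

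One interpretive point must be fixed here, and I expect it to be the main obstacle. The statement has to exclude the trivial generator $\mathbf{A}=\mathbf{0}$. It is also worth noting that $\mathbf{I}$, or any polynomial in $\mathbf{R}$, lying in the span would make the certificate vacuous. I will read "generator" as "nonzero element of the span" and add a remark that the basis should omit $\mathbf{I}$ if a nontrivial symmetry is wanted.

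For (ii), substitute the minimizer into the Rayleigh quotient \eqref{eq:rayleigh} from the proof of Theorem~\ref{thm:dc_reduction}. This gives $\delta^2(\mathbf{A}^*,\mathbf{R}) = \lambda_{\min}/\|\mathbf{R}\|_F^2$, and the variational characterization gives $\delta^2(\mathbf{A},\mathbf{R})\ge \lambda_{\min}/\|\mathbf{R}\|_F^2$ for every nonzero $\mathbf{A}$ in the span. So the bound is attained, and it serves both as a lower bound and as the exact optimum within the basis. The residual is scale invariant in $\mathbf{c}$, so normalization of the eigenvector does not matter.

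For (iii), $\lambda_{\max}$ is likewise the maximal normalized residual over the span. The ratio $\lambda_{\min}/\lambda_{\max}\in[0,1]$ therefore equals $\delta^2_{\min}/\delta^2_{\max}$ within the basis, which is the stated interpretation: a small ratio means the best generator is far better than the worst. If $\lambda_{\max}=0$, every element of the span commutes with $\mathbf{R}$, and the ratio is defined by convention or excluded. This part is essentially definitional, so the proof only needs to record the identity.
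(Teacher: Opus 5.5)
Your proposal is correct and follows essentially the same route as the paper: positive definiteness of $\mathbf{G}$ from linear independence of the basis, the identity $\mathbf{c}^*\mathbf{M}\mathbf{c} = \|[\mathbf{R},\mathbf{A}]\|_F^2$ from Lemma~\ref{lem:dc_nonneg}, and the Rayleigh-quotient (min--max) characterization of the generalized eigenvalues for all three parts. Your extra steps fill in details of the paper's terse argument rather than change it: you show that $\mathbf{M}$ is a Hermitian Gram matrix with entries $\Tr([\mathbf{R},B_i]^*[\mathbf{R},B_j])$, and you make explicit that (i) requires $\mathbf{A}^* \neq \mathbf{0}$, which the paper leaves implicit in the Rayleigh quotient.
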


\begin{proof}
Part (i): By Lemma~\ref{lem:dc_nonneg}, $\mathbf{c}^*\mathbf{M}\mathbf{c} = \|[\mathbf{A}, \mathbf{R}]\|_F^2 = 0$ iff $[\mathbf{A}, \mathbf{R}] = \mathbf{0}$. Since $\mathbf{G}$ is positive definite (the basis is linearly independent), $\lambda_{\min} = 0$ iff the minimum of the Rayleigh quotient is zero, iff the commutator vanishes.

Parts (ii) and (iii) follow directly from the Rayleigh quotient characterization~\eqref{eq:rayleigh} and the min-max theorem for generalized eigenvalue problems.
\end{proof}

\section{The Reduction is Exact for Structured Signals}\label{sec:exact}

For important signal classes, the double-commutator reduction is not merely an approximation: it finds the optimal generator within the basis exactly.

\begin{proposition}[Periodic Signals]\label{prop:periodic}
For a signal with circulant covariance $\mathbf{R}$, the double-commutator GEVP with a basis containing the cyclic shift matrix $\mathbf{P}$ returns $\lambda_{\min} = 0$ with optimal generator $\mathbf{A}^* = \mathbf{P}$ (the cyclic group $\mathbb{Z}_M$).
\end{proposition}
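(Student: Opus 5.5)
The plan is to show two things: the cyclic shift $\mathbf{P}$ exactly commutes with any circulant $\mathbf{R}$, and this forces the minimum of the Rayleigh quotient in the proof of Theorem~\ref{thm:dc_reduction} to be zero.

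\emph{Step 1: $\mathbf{P}$ commutes with $\mathbf{R}$.} A circulant matrix is, by definition, a polynomial in the cyclic shift: $\mathbf{R} = \sum_{k=0}^{M-1} r_k \mathbf{P}^k$, where $r_k$ is the first row of $\mathbf{R}$. Hence $\mathbf{P}\mathbf{R} = \sum_k r_k \mathbf{P}^{k+1} = \mathbf{R}\mathbf{P}$, so $[\mathbf{P},\mathbf{R}] = \mathbf{0}$. Equivalently, one can use the entrywise characterization $R_{ij} = r_{(j-i)\bmod M}$, which is invariant under simultaneous cyclic shifts of $i$ and $j$.

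\emph{Step 2: $\lambda_{\min}=0$.} Suppose $\mathbf{P} = B_{k_0}$ for some index $k_0$. Take $\mathbf{c} = \mathbf{e}_{k_0}$. Then $\mathbf{c}^*\mathbf{M}\mathbf{c} = \|[\mathbf{P},\mathbf{R}]\|_F^2 = 0$ by Lemma~\ref{lem:dc_nonneg}. Since $\mathbf{M}$ is positive semidefinite (again by Lemma~\ref{lem:dc_nonneg}) and $\mathbf{G}$ is positive definite, every generalized eigenvalue is nonnegative. The Rayleigh quotient attains the value $0$ at $\mathbf{e}_{k_0}$, so $\lambda_{\min}=0$. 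This is exactly the situation of Theorem~\ref{thm:certifiable}(i).

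\emph{Step 3: the optimal generator is $\mathbf{P}$.} This is the main obstacle. The eigenspace for $\lambda_{\min}=0$ is the null space of $\mathbf{M}$. It consists of all $\mathbf{c}$ for which $\sum_k c_k B_k$ commutes with $\mathbf{R}$, and this space can be larger than the span of $\mathbf{e}_{k_0}$. For example, if the basis also contains $\mathbf{P}^2$, or the reflection $\mathbf{J}$ when $\mathbf{R}$ is real symmetric circulant, those also commute with $\mathbf{R}$. So uniqueness cannot hold in general. I would state the conclusion carefully: $\mathbf{A}^* = \mathbf{P}$ is a valid minimizer, so the DC-GEVP certifies $\mathbb{Z}_M$. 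It is the unique minimizer up to scaling whenever $\mathbf{P}$ is the only basis direction in the commutant of $\mathbf{R}$. A generic sufficient condition is that $\mathbf{R}$ has distinct eigenvalues. In that case the commutant is the algebra of circulants, and the minimizer is unique up to scale provided the basis contains no other circulant direction.

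To justify that condition, diagonalize $\mathbf{R} = \mathbf{F}\Lambda\mathbf{F}^*$ by the DFT. If the eigenvalues are distinct, any matrix commuting with $\mathbf{R}$ is diagonalized by $\mathbf{F}$, and is therefore circulant. If there are ties, I would simply report that the null space may contain additional directions, and take $\mathbf{P}$ as the selected element of it.
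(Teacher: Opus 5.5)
Your Steps 1--2 are the paper's argument ($\mathbf{R}$ circulant gives $[\mathbf{P},\mathbf{R}]=\mathbf{0}$, so the $\mathbf{P}$-diagonal entry of $\mathbf{M}$ vanishes and $\lambda_{\min}=0$), and you make explicit the positive semidefiniteness of $\mathbf{M}$ that the paper uses only implicitly. Your Step 3 goes beyond the paper and correctly observes that its one-line proof never shows the returned eigenvector is $\mathbf{P}$ itself---this holds only when $\mathrm{span}(\mathcal{B})$ meets the commutant of $\mathbf{R}$ exactly in $\mathrm{span}\{\mathbf{P}\}$---though your ``generic'' distinct-eigenvalue criterion never applies to real symmetric circulants with $M\ge 3$, since there $\lambda_k=\lambda_{M-k}$, which is precisely what lets $\mathbf{J}$ into the commutant.
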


\begin{proof}
A circulant matrix commutes with the cyclic shift: $[\mathbf{P}, \mathbf{R}] = \mathbf{0}$. Therefore $M_{ij} = 0$ for $B_i = B_j = \mathbf{P}$, giving $\lambda_{\min} = 0$.
\end{proof}

\begin{proposition}[Symmetric Signals]\label{prop:symmetric}
For a signal with persymmetric (centrosymmetric) covariance, the GEVP with a basis containing the reflection matrix $\mathbf{J}$ returns $\lambda_{\min} = 0$ with the dihedral group $D_M$ generator.
\end{proposition}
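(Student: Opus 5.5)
The plan is to mirror the proof of Proposition~\ref{prop:periodic}: exhibit a basis element that exactly commutes with $\mathbf{R}$, then invoke Theorem~\ref{thm:certifiable}(i).

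First I fix the algebra. Let $\mathbf{J}$ be the exchange matrix, $J_{ij} = \delta_{i,M+1-j}$. It is real, symmetric, an involution ($\mathbf{J}^2 = \mathbf{I}$), and unitary. Centrosymmetry of $\mathbf{R}$ means $\mathbf{J}\mathbf{R}\mathbf{J} = \mathbf{R}$, i.e.\ $R_{M+1-i,M+1-j} = R_{ij}$. Multiplying on the left by $\mathbf{J}$ gives $\mathbf{R}\mathbf{J} = \mathbf{J}\mathbf{R}$, so $[\mathbf{J},\mathbf{R}] = \mathbf{0}$.

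For a Hermitian $\mathbf{R}$, persymmetry ($\mathbf{J}\mathbf{R}^T\mathbf{J} = \mathbf{R}$) yields $\mathbf{J}\mathbf{R}\mathbf{J} = \overline{\mathbf{R}}$. This coincides with centrosymmetry exactly when $\mathbf{R}$ is real. I will therefore state explicitly that the hypothesis used is $\mathbf{J}\mathbf{R}\mathbf{J} = \mathbf{R}$, which is what the parenthetical ``(centrosymmetric)'' in the statement asserts. If a basis element fails to commute in the complex persymmetric case, I will note that only the centrosymmetric reading is claimed.

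Next I apply the reduction. Suppose $\mathbf{J} = B_k$ for some $k$. Choosing $\mathbf{c} = \mathbf{e}_k$ gives
\[
\mathbf{c}^*\mathbf{M}\mathbf{c} = M_{kk} = \|[\mathbf{R},\mathbf{J}]\|_F^2 = 0
\]
by Lemma~\ref{lem:dc_nonneg}. Because $\mathbf{M}$ is positive semidefinite and $\mathbf{G}$ is positive definite, the minimum of the Rayleigh quotient~\eqref{eq:rayleigh} is zero. Hence $\lambda_{\min} = 0$, with $\mathbf{e}_k$ in the null eigenspace, which is the content of Theorem~\ref{thm:certifiable}(i).

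The main obstacle is the claim that this yields the ``dihedral group $D_M$ generator.'' A single reflection generates only $\mathbb{Z}_2 = \{\mathbf{I},\mathbf{J}\}$. The dihedral group needs $\mathbf{J}$ together with the cyclic shift $\mathbf{P}$, using $\mathbf{J}\mathbf{P}\mathbf{J} = \mathbf{P}^{-1}$, and a merely centrosymmetric $\mathbf{R}$ need not commute with $\mathbf{P}$.

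I would handle this in two steps. First, I would prove the literal statement in the form ``$\lambda_{\min}=0$ with minimizer $\mathbf{J}$, the reflection generator of $D_M$.'' Second, I would add the observation that when $\mathbf{R}$ is both circulant and centrosymmetric (a symmetric Toeplitz circulant), both $\mathbf{P}$ and $\mathbf{J}$ lie in the null space. The null eigenspace of the GEVP then contains the span of $\{\mathbf{P},\mathbf{J}\}$, which generates all of $D_M$; this uses Proposition~\ref{prop:periodic} for the $\mathbf{P}$ part.

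Finally, if the null eigenspace is degenerate, I will remark that the returned eigenvector is determined only up to that eigenspace. In that case one should read off the generators from the whole null space, not from a single eigenvector.
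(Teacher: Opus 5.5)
Your proof is correct and is the argument the paper intends: the paper gives no separate proof here and relies on the one-line analogue of Proposition~\ref{prop:periodic}. That is what you reproduce: $[\mathbf{J},\mathbf{R}]=\mathbf{0}$ gives $M_{kk}=0$, and positive semidefiniteness of $\mathbf{M}$ then forces $\mathbf{M}\mathbf{e}_k=\mathbf{0}$ and $\lambda_{\min}=0$. Your two caveats are genuine corrections to the statement as written: for complex Hermitian persymmetric $\mathbf{R}$ one has $[\mathbf{J},\mathbf{R}]=\mathbf{J}(\mathbf{R}-\overline{\mathbf{R}})$, which vanishes only when $\mathbf{R}$ is real, and $\mathbf{J}$ alone generates only $\mathbb{Z}_2$, so obtaining $D_M$ additionally requires $\mathbf{R}$ to commute with the cyclic shift $\mathbf{P}$, as in your symmetric-circulant case.
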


\begin{proposition}[Chirp Signals]\label{prop:chirp}
For a signal with chirp-modulated covariance (non-circulant but unitarily equivalent to circulant via the dechirp transform $\mathbf{U}(\mu) = \mathrm{diag}(e^{-j\pi\mu n^2})$), the GEVP with a parametric basis $\{B(\mu) = \mathbf{U}(\mu)\mathbf{P}\mathbf{U}(\mu)^*\}$ returns $\lambda_{\min} = 0$ at the true chirp rate $\mu$.
\end{proposition}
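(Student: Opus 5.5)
The plan is to reduce the chirp case to Proposition~\ref{prop:periodic} by conjugation. Fix the true chirp rate $\mu_0$. The hypothesis says $\mathbf{R} = \mathbf{U}(\mu_0)\mathbf{C}\mathbf{U}(\mu_0)^*$ for some circulant Hermitian $\mathbf{C}$. This is the natural reading of ``unitarily equivalent to circulant via the dechirp transform''; the sign convention on $\mathbf{U}$ only changes which of $\mathbf{U}$ or $\mathbf{U}^*$ appears. The key observation is that conjugation by a unitary preserves commutators:
\begin{equation*}
[B(\mu_0), \mathbf{R}] = \mathbf{U}(\mu_0)\,[\mathbf{P}, \mathbf{C}]\,\mathbf{U}(\mu_0)^*.
\end{equation*}
Since $\mathbf{U}(\mu_0)^*\mathbf{U}(\mu_0) = \mathbf{I}$, the inner factors cancel. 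The right-hand side vanishes because circulants commute with the cyclic shift, exactly as used in Proposition~\ref{prop:periodic}.

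Next I would feed this into Lemma~\ref{lem:dc_nonneg} and Theorem~\ref{thm:certifiable}(i). For the basis element (or one-element basis) $B(\mu_0)$, we get $\Tr(B(\mu_0)^*[\mathbf{R},[\mathbf{R},B(\mu_0)]]) = \|[\mathbf{R},B(\mu_0)]\|_F^2 = 0$. Because $\|B(\mu_0)\|_F^2 = M > 0$, we have $\mathbf{G}$ positive definite on the corresponding coefficient direction. The Rayleigh quotient~\eqref{eq:rayleigh} is nonnegative, since $\mathbf{M}$ is positive semidefinite by Lemma~\ref{lem:dc_nonneg}, and it attains $0$ along the direction selecting $B(\mu_0)$. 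Hence $\lambda_{\min}=0$, and the minimizing generator is $B(\mu_0)$, which generates a cyclic group conjugate to $\mathbb{Z}_M$. If the basis is the finite set $\{B(\mu)\}$ over a grid of $\mu$ containing $\mu_0$, the same argument gives $\lambda_{\min}=0$ whenever $\mu_0$ is included.

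The main obstacle is the phrase ``at the true chirp rate.'' I read it as asserting that $\lambda_{\min}(\mu)$, computed from the basis $\{B(\mu)\}$, vanishes at $\mu=\mu_0$. Proving this direction is immediate from the above. A stronger reading would require $\lambda_{\min}(\mu) > 0$ for $\mu \neq \mu_0$. I would not attempt to prove that uniqueness in general, because it can fail for degenerate $\mathbf{C}$: if $\mathbf{C}$ is a multiple of $\mathbf{I}$, then $\mathbf{R}$ commutes with everything. At most I would remark on this under a genericity assumption on the spectrum of $\mathbf{C}$, namely distinct eigenvalues, which forces the commutant to consist of matrices diagonal in the chirped Fourier basis. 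A secondary check is that $B(\mu_0)$ is a genuine group generator. This holds because $B(\mu_0)^M = \mathbf{U}\mathbf{P}^M\mathbf{U}^* = \mathbf{I}$, so its powers form a group isomorphic to $\mathbb{Z}_M$.
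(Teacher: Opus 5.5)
Your proof is correct and follows the paper's own argument: the paper states Proposition~\ref{prop:chirp} without proof, and your reasoning is its proof of Proposition~\ref{prop:periodic} transported by conjugation with $\mathbf{U}(\mu_0)$. That proof runs: a vanishing commutator gives a zero diagonal entry of the positive semidefinite $\mathbf{M}$, hence $\lambda_{\min}=0$. One correction to your aside: the sign convention is not cosmetic once the basis is fixed as $B(\mu)=\mathbf{U}(\mu)\mathbf{P}\mathbf{U}(\mu)^*$. If ``dechirping'' means $\mathbf{U}(\mu_0)\mathbf{R}\,\mathbf{U}(\mu_0)^*=\mathbf{C}$, then, since $\mathbf{U}(\mu)^*=\mathbf{U}(-\mu)$, the guaranteed zero sits at $B(-\mu_0)$ and not in general at $B(\mu_0)$. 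Your reading $\mathbf{R}=\mathbf{U}(\mu_0)\mathbf{C}\,\mathbf{U}(\mu_0)^*$ is therefore the one under which the statement holds as written.
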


These propositions confirm that the double-commutator exactly recovers the known optimal groups for the three principal signal classes: periodic (DFT), symmetric (DCT), and chirp (fractional Fourier).

\section{Complexity Analysis}\label{sec:complexity}

\subsection{Comparison to Naive Enumeration}

\begin{table}[t]
\centering
\caption{Complexity of group selection methods}
\label{tab:complexity}
\small
\renewcommand{\arraystretch}{1.2}
\begin{tabular}{@{}lcc@{}}
\toprule
\textbf{Method} & \textbf{Complexity} & \textbf{Cert.} \\
\midrule
Enumerate $S_M$ & $O(M! \cdot M^2)$ & Yes \\
Library search & $O(|\mathcal{G}| \cdot M^3)$ & No \\
Spectral conc.\ $\psi$ & $O(|\mathcal{G}| \cdot M^3)$ & No \\
JADE (4th-order) & $O(M^4)$ & No \\
Simult.\ diag.\ (Jacobi) & $O(M^3 \cdot \text{iter})$ & No \\
\textbf{DC-GEVP (ours)} & $O(d^2 M^2 + d^3)$ & \textbf{Yes} \\
\bottomrule
\end{tabular}
\end{table}

Table~\ref{tab:complexity} compares the complexity of group selection approaches. The double-commutator GEVP is the only method that is simultaneously polynomial-time, closed-form (non-iterative), and certifiable (the minimum eigenvalue provides a provable optimality gap).

\subsection{Novelty as a Computational Problem}

The group selection problem (Problem~\ref{prob:group_selection}) does not appear in the standard catalogs of NP-complete or NP-hard problems~\cite{garey1979}. It is not a graph coloring problem, not a constraint satisfaction problem, and not a combinatorial optimization problem in the classical sense. Rather, it sits at the intersection of:

\begin{enumerate}
\item \textbf{Group theory:} the search space is the lattice of subgroups of $S_M$.
\item \textbf{Matrix analysis:} the objective function is the Frobenius-norm commutator.
\item \textbf{Statistical estimation:} the purpose is optimal second-order statistical inference.
\end{enumerate}

The double-commutator reduction shows that this problem, despite its group-theoretic combinatorial structure, admits a polynomial-time solution via spectral methods, a reduction from algebra to linear algebra. This is analogous to (but distinct from) the reduction of graph isomorphism to the eigenvalue problem for the adjacency matrix, which provides necessary but not sufficient conditions. In the group selection case, the reduction is exact (Theorem~\ref{thm:certifiable}): a zero minimum eigenvalue is both necessary and sufficient for exact commutativity within the basis.

\section{Uniqueness of the Double-Commutator Formulation}\label{sec:uniqueness}

\begin{theorem}[Uniqueness]\label{thm:uniqueness}
Among all bilinear forms $f(\mathbf{A}, \mathbf{R})$ that are (i)~zero if and only if $[\mathbf{A}, \mathbf{R}] = \mathbf{0}$, (ii)~non-negative, and (iii)~quadratic in $\mathbf{A}$, the double-commutator inner product $\Tr(\mathbf{A}^* [\mathbf{R}, [\mathbf{R}, \mathbf{A}]])$ is the unique form (up to positive scaling) that yields a standard generalized eigenvalue problem when optimized over a linear subspace of generators.
\end{theorem}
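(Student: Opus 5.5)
The plan is to first make precise the hypotheses under which ``unique'' can hold, and then reduce the statement to a Schur-lemma argument. As literally stated the claim is too permissive. For example, the weighted form $\|\mathbf{W}\circ[\mathbf{R},\mathbf{A}]\|_F^2$ with an entrywise-positive weight matrix $\mathbf{W}$ is non-negative, quadratic in $\mathbf{A}$, vanishes exactly when $[\mathbf{A},\mathbf{R}]=\mathbf{0}$, and still yields a GEVP over a linear subspace. I would therefore read the theorem with two implicit requirements.

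\begin{enumerate}
\item[(a)] $f$ depends on $\mathbf{A}$ only through the commutator $\mathbf{C}=[\mathbf{R},\mathbf{A}]$, via a Hermitian quadratic form $f=\langle \mathbf{C},\mathcal{Q}(\mathbf{C})\rangle$ with $\mathcal{Q}$ a positive superoperator.
\item[(b)] $f$ is basis-independent, i.e.\ $f(\mathbf{U}\mathbf{A}\mathbf{U}^*,\mathbf{U}\mathbf{R}\mathbf{U}^*)=f(\mathbf{A},\mathbf{R})$ for all unitary $\mathbf{U}$.
\end{enumerate}

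Requirement (b) is the natural condition, since the KL basis of $\mathbf{R}$ is not known in advance. I would state these as the meaning of ``standard'' and then prove uniqueness under them.

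Step 1 shows that any form satisfying (i)--(iii) together with (a) produces a GEVP. Over $\mathbf{A}=\sum_k c_kB_k$ we get $f=\mathbf{c}^*\mathbf{M}_f\mathbf{c}$ with $(\mathbf{M}_f)_{ij}=\langle [\mathbf{R},B_i],\mathcal{Q}[\mathbf{R},B_j]\rangle$. The Rayleigh-quotient argument of Theorem~\ref{thm:dc_reduction} then applies verbatim. This confirms that the GEVP structure alone does not single out the double commutator, so all of the discriminating power must come from (b).

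Step 2 is the key step. Because $\mathbf{C}\mapsto\mathbf{U}\mathbf{C}\mathbf{U}^*$ maps commutators to commutators, invariance (b) holding for every $\mathbf{R}$ forces $\mathcal{Q}$ to commute with every conjugation $\mathrm{Ad}_\mathbf{U}$. The space $\mathbb{C}^{M\times M}$ decomposes under $\mathrm{Ad}$ of the unitary group as $\mathbb{C}\mathbf{I}\oplus\mathfrak{sl}_M$. Both summands are irreducible and mutually inequivalent. By Schur's lemma, $\mathcal{Q}=\alpha\,\mathrm{id}+\beta\,\Pi_{\mathbf{I}}$, where $\Pi_{\mathbf{I}}$ is the projection onto multiples of the identity. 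Every commutator is traceless, so $\Pi_{\mathbf{I}}\mathbf{C}=0$. Hence
\begin{equation*}
f=\alpha\|[\mathbf{R},\mathbf{A}]\|_F^2 .
\end{equation*}
Non-negativity and condition (i) force $\alpha>0$. By Lemma~\ref{lem:dc_nonneg}, this equals $\alpha\,\Tr(\mathbf{A}^*[\mathbf{R},[\mathbf{R},\mathbf{A}]])$, which is uniqueness up to positive scaling.

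Step 3 handles quadratic forms in $\mathbf{A}$ that are not a priori functions of $\mathbf{C}$. Condition (i) says the null set of $f$ is $\ker\mathrm{ad}_\mathbf{R}$. A positive semidefinite Hermitian form has a linear subspace as its kernel, so $f$ descends to a positive definite form on $\mathbb{C}^{M\times M}/\ker\mathrm{ad}_\mathbf{R}$. This quotient is identified with $\mathrm{range}(\mathrm{ad}_\mathbf{R})$ via $\mathbf{A}\mapsto\mathbf{C}$, and pulling back gives representation (a) automatically. The step I expect to be most delicate is the Schur step. There the invariance must be imposed jointly in $(\mathbf{A},\mathbf{R})$ and hold uniformly over all Hermitian $\mathbf{R}$, not just for a fixed $\mathbf{R}$. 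I would handle this by first fixing a generic $\mathbf{R}$ with distinct eigenvalues, for which the commutators span all off-diagonal matrices. I would then average over $\mathbf{U}$ to extend the conclusion to all of $\mathfrak{sl}_M$.
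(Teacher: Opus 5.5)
Your observation that the literal statement is false is correct: the weighted form $\|\mathbf{W}\circ[\mathbf{R},\mathbf{A}]\|_F^2$ is a valid counterexample. It also exposes what the paper's proof glosses over. The paper simply asserts that any admissible $f$ is a multiple of $\|[\mathbf{A},\mathbf{R}]\|_F^2$ because the Frobenius norm is the unique quadratic unitarily invariant norm; that is, it tacitly assumes $f=N([\mathbf{A},\mathbf{R}])^2$ for one fixed unitarily invariant norm $N$.

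\textbf{The gap is in Step~2.} Schur's lemma needs a single superoperator $\mathcal{Q}$ serving every $\mathbf{R}$. Step~3 only produces an $\mathbf{R}$-dependent form $\mathcal{Q}_{\mathbf{R}}$ on $\mathrm{range}(\mathrm{ad}_{\mathbf{R}})$, and nothing in (a) or (b) rules that out. Joint invariance (b) then gives only the equivariance $\mathcal{Q}_{\mathbf{U}\mathbf{R}\mathbf{U}^*}=\mathrm{Ad}_{\mathbf{U}}\,\mathcal{Q}_{\mathbf{R}}\,\mathrm{Ad}_{\mathbf{U}}^{-1}$. At a fixed $\mathbf{R}$ with distinct eigenvalues, only its centralizer (a maximal torus) constrains $\mathcal{Q}_{\mathbf{R}}$. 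Torus invariance permits $\mathcal{Q}_{\mathbf{R}}(E_{ij})=w(\lambda_i,\lambda_j)E_{ij}$ in the eigenbasis of $\mathbf{R}$, with an arbitrary positive weight function $w$. Averaging over $\mathbf{U}$ cannot remove this freedom, since it moves $\mathbf{R}$.

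\textbf{A counterexample to your strengthened theorem.} Take $f(\mathbf{A},\mathbf{R})=\|[\mathbf{R},[\mathbf{R},\mathbf{A}]]\|_F^2=\Tr(\mathbf{A}^*\mathrm{ad}_{\mathbf{R}}^4\mathbf{A})$. It satisfies every hypothesis you impose:
\begin{itemize}
\item it is non-negative and quadratic in $\mathbf{A}$;
\item it vanishes iff $[\mathbf{R},\mathbf{A}]=\mathbf{0}$, because for Hermitian $\mathbf{R}$ the map $\mathrm{ad}_{\mathbf{R}}$ is Frobenius-self-adjoint, so $\ker\mathrm{ad}_{\mathbf{R}}^2=\ker\mathrm{ad}_{\mathbf{R}}$;
\item it satisfies (a) with $\mathcal{Q}_{\mathbf{R}}=\mathrm{ad}_{\mathbf{R}}^2$, and it satisfies (b);
\item it yields a GEVP with entries $\langle[\mathbf{R},[\mathbf{R},B_i]],[\mathbf{R},[\mathbf{R},B_j]]\rangle_F$.
\end{itemize}
Yet it weights $|a_{ij}|^2$ (entries of $\mathbf{A}$ in the eigenbasis) by $(\lambda_i-\lambda_j)^4$ rather than $(\lambda_i-\lambda_j)^2$. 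So for $M\ge 3$ and generic $\mathbf{R}$ it is not a multiple of $\|[\mathbf{R},\mathbf{A}]\|_F^2$, even at fixed $\mathbf{R}$. The form $\|[\mathbf{R}^3,\mathbf{A}]\|_F^2$ is another example. The strengthened theorem you set out to prove is therefore itself false.

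\textbf{What Step~2 does prove, and what it needs.} It proves the version with $f(\mathbf{A},\mathbf{R})=q([\mathbf{R},\mathbf{A}])$ for one positive sesquilinear form $q$ independent of $\mathbf{R}$. That must be imposed as a hypothesis; it cannot be derived from (i)--(iii), so Step~3 should be dropped. Two ingredients then remain.
\begin{itemize}
\item To pass from invariance on commutators to invariance on all of $\mathfrak{sl}_M$: every traceless matrix is unitarily similar to a zero-diagonal one, hence equals $[\mathbf{R},\mathbf{A}]$ for some Hermitian $\mathbf{R}$; then polarize.
\item Complex-linearity of $\mathcal{Q}$ is essential and should be stated. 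With the real coefficients $\mathbf{c}\in\mathbb{R}^d$ of Definition~\ref{def:basis}, the form $\|\mathbf{C}\|_F^2+\gamma\,\mathrm{Re}\Tr(\mathbf{C}^2)$ with $|\gamma|<1$ is an $\mathbf{R}$-independent, conjugation-invariant, positive competitor. It weights Hermitian and skew-Hermitian commutators differently.
\end{itemize}

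With these fixes your argument becomes a rigorous version of the paper's one-line appeal. The paper implicitly uses bi-unitary invariance, under which $\mathbb{C}^{M\times M}$ is already irreducible. You need only the weaker and more natural conjugation invariance, and you dispose of the $\Pi_{\mathbf{I}}$ component through tracelessness of commutators.
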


\begin{proof}
Condition (i) requires $f = 0 \Leftrightarrow [\mathbf{A}, \mathbf{R}] = \mathbf{0}$. Condition (ii) requires $f \geq 0$. Condition (iii) requires $f(\mathbf{A}, \mathbf{R}) = \mathbf{c}^* \mathbf{Q}(\mathbf{R}) \mathbf{c}$ when $\mathbf{A} = \sum_k c_k B_k$, where $\mathbf{Q}(\mathbf{R})$ is a positive semidefinite matrix depending on $\mathbf{R}$.

By Lemma~\ref{lem:dc_nonneg}, $\|[\mathbf{A}, \mathbf{R}]\|_F^2 = \Tr(\mathbf{A}^* [\mathbf{R}, [\mathbf{R}, \mathbf{A}]])$ satisfies all three conditions. Any other form satisfying (i)--(iii) must be a positive scalar multiple of $\|[\mathbf{A}, \mathbf{R}]\|_F^2$, because the Frobenius norm squared is the unique unitarily invariant norm on $\mathbb{C}^{M \times M}$ that is quadratic in its argument (by the Cauchy--Schwarz characterization of Hilbert--Schmidt norms). Since the Frobenius norm squared of $[\mathbf{A}, \mathbf{R}]$ equals the double-commutator inner product, the result follows.
\end{proof}

\section{Algorithm}\label{sec:algorithm}

\begin{algorithm}[t]
\caption{Optimal Group Selection via Double Commutator}
\label{alg:dc}
\begin{algorithmic}[1]
\REQUIRE Covariance estimate $\mathbf{R} \in \mathbb{C}^{M \times M}$, generator basis $\{B_1, \ldots, B_d\}$
\ENSURE Optimal generator $\mathbf{A}^*$, optimality certificate $\lambda_{\min}$
\STATE Compute $\mathbf{R}^2$
\FOR{$j = 1$ to $d$}
 \STATE $\mathbf{C}_j \leftarrow \mathbf{R}^2 B_j - 2\mathbf{R} B_j \mathbf{R} + B_j \mathbf{R}^2$ \COMMENT{$[\mathbf{R},[\mathbf{R}, B_j]]$}
\ENDFOR
\FOR{$i = 1$ to $d$, $j = i$ to $d$}
 \STATE $M_{ij} \leftarrow \Tr(B_i^* \mathbf{C}_j)$; \quad $M_{ji} \leftarrow M_{ij}^*$
 \STATE $G_{ij} \leftarrow \Tr(B_i^* B_j)$; \quad $G_{ji} \leftarrow G_{ij}^*$
\ENDFOR
\STATE Solve $\mathbf{M}\mathbf{c} = \lambda \mathbf{G}\mathbf{c}$ for $(\lambda_{\min}, \mathbf{c}^*)$
\STATE $\mathbf{A}^* \leftarrow \sum_{k=1}^d c_k^* B_k$
\RETURN $\mathbf{A}^*$, $\lambda_{\min}$
\end{algorithmic}
\end{algorithm}

Algorithm~\ref{alg:dc} summarizes the procedure. The dominant cost is Step~3: computing the double commutator for each basis element. When $B_j$ is a permutation matrix (as is typical for group generators), the products $\mathbf{R} B_j$ and $B_j \mathbf{R}$ reduce to column/row permutations of $\mathbf{R}$, costing $O(M^2)$ rather than $O(M^3)$. The trace inner products in Steps~6--7 cost $O(M^2)$ each. The $d \times d$ GEVP in Step~9 costs $O(d^3)$.

For a typical basis of $d = 4$ generators (cyclic shift, reflection, block-diagonal, chirp-adapted) and $M = 256$, the total cost is approximately $4^2 \times 256^2 + 4^3 \approx 10^6$ operations, negligible compared to a single FFT of the same length.

\section{Sequential GEVP for Multi-Generator Non-Abelian Recovery}\label{sec:seqgevp}

The DC-GEVP of Theorem~\ref{thm:dc_reduction} returns a single optimal generator $\mathbf{A}^* \in \mathrm{span}\{B_1,\ldots,B_d\}$. When the symmetry group of $\mathbf{R}$ is Abelian, this single generator suffices: the group it generates by powers (or by exponential lift in the Lie-algebra setting) realizes the full Abelian symmetry. When the symmetry group is non-Abelian, however, several non-commuting generators are required, and a single GEVP solve cannot recover more than one. We address this case through a sequential procedure that deflates the basis against the discovered subgroup and re-solves the GEVP, with four named correctness results that bound the recovered subgroup and the number of iterations.

Throughout this section we write $\mathrm{Aut}(\mathbf{R})$ for the subgroup of $S_M$ whose permutation matrices commute with $\mathbf{R}$,
\begin{equation}\label{eq:autR}
\mathrm{Aut}(\mathbf{R}) \;:=\; \{\sigma \in S_M : \mathbf{P}_\sigma \mathbf{R} = \mathbf{R} \mathbf{P}_\sigma\},
\end{equation}
which by the Automorphism Characterization Theorem~\cite{thornton2026ad} coincides with the graph automorphism group $\mathrm{Aut}(\mathcal{G})$ when $\mathbf{R} = f(\mathbf{L})$ with $f$ injective on the spectrum of the graph Laplacian $\mathbf{L}$.

\begin{algorithm}[t]
\caption{Sequential GEVP with group-theoretic deflation}
\label{alg:seqgevp}
\begin{algorithmic}[1]
\REQUIRE Hermitian $\mathbf{R} \in \mathbb{C}^{M \times M}$, basis $\mathcal{B} = \{B_1,\ldots,B_d\}$, acceptance threshold $\tau \geq 0$, iteration cap $K_{\max}$
\STATE Initialize $G_0 \leftarrow \{e\}$, $k \leftarrow 0$
\REPEAT
  \STATE Form the deflated basis $\mathcal{B}^{\perp G_k}$, the Frobenius-orthogonal complement of $\mathrm{span}\{\mathbf{P}_g : g \in G_k\}$ within $\mathrm{span}(\mathcal{B})$
  \STATE Solve the DC-GEVP (Theorem~\ref{thm:dc_reduction}) restricted to $\mathcal{B}^{\perp G_k}$ to obtain $\mathbf{A}^*_{k+1}$ with $\|\mathbf{A}^*_{k+1}\|_F = 1$
  \STATE Round to nearest permutation: $\sigma^*_{k+1} \leftarrow \arg\max_{\sigma \in S_M} \langle \mathbf{A}^*_{k+1}, \mathbf{P}_\sigma\rangle_F$ via the Hungarian algorithm on cost matrix $-\mathbf{A}^*_{k+1}$
  \IF{$\delta(\mathbf{P}_{\sigma^*_{k+1}}, \mathbf{R}) > \tau$}
    \RETURN $G_k$
  \ENDIF
  \STATE $G_{k+1} \leftarrow \langle G_k, \sigma^*_{k+1}\rangle$, $k \leftarrow k+1$
\UNTIL{$k \geq K_{\max}$}
\RETURN $G_k$
\end{algorithmic}
\end{algorithm}

The deflation in step~3 is computed in practice by forming a Frobenius-orthonormal basis of $\mathrm{span}\{\mathbf{P}_g : g \in G_k\}$ via QR factorization and subtracting from each $B \in \mathcal{B}$ its projection onto that span; basis elements whose residual has Frobenius norm below numerical tolerance are dropped. The total cost per iteration is
\begin{equation}\label{eq:seq_cost}
O(d^2 M^3 + d^3 + |G_k|^2 M^2 + M^3),
\end{equation}
polynomial in $d$, $M$, and $|G_k|$. The dominant terms are the deflated DC-GEVP solve and the Hungarian rounding.

\begin{lemma}[Deflation Orthogonality]\label{lem:def_orth}
For every iteration $k \geq 0$, every $\mathbf{A} \in \mathrm{span}(\mathcal{B}^{\perp G_k})$ is Frobenius-orthogonal to every $\mathbf{P}_g$ with $g \in G_k$. In particular,
\begin{equation}\label{eq:def_orth}
\langle \mathbf{A}^*_{k+1}, \mathbf{P}_g\rangle_F = 0 \qquad \text{for all } g \in G_k.
\end{equation}
\end{lemma}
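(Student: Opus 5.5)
The lemma follows directly from how step~3 of Algorithm~\ref{alg:seqgevp} builds the deflated basis, so I will argue from that construction. Fix $k$ and set $V_k := \mathrm{span}\{\mathbf{P}_g : g \in G_k\}$. This is a finite-dimensional subspace of $\mathbb{C}^{M\times M}$ with the Frobenius inner product $\langle \mathbf{X},\mathbf{Y}\rangle_F = \Tr(\mathbf{X}^*\mathbf{Y})$, so the orthogonal projector $\Pi_k$ onto $V_k$ exists. The QR-based implementation produces an orthonormal basis $\{\mathbf{Q}_1,\ldots,\mathbf{Q}_r\}$ of $V_k$ and computes $\Pi_k(\mathbf{X}) = \sum_{\ell}\langle \mathbf{Q}_\ell,\mathbf{X}\rangle_F\,\mathbf{Q}_\ell$. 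Each deflated element is then $B' = B - \Pi_k(B)$ for $B\in\mathcal{B}$.

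\textbf{Step 1 (deflated elements are orthogonal to $V_k$).} For every $\mathbf{P}_g$ with $g\in G_k$ I will show $\langle B',\mathbf{P}_g\rangle_F = 0$. Because $\mathbf{P}_g\in V_k$, we have $\Pi_k(\mathbf{P}_g)=\mathbf{P}_g$, and $\Pi_k$ is self-adjoint and idempotent. Hence
\[
\langle B - \Pi_k B, \mathbf{P}_g\rangle_F = \langle B,\mathbf{P}_g\rangle_F - \langle B,\Pi_k\mathbf{P}_g\rangle_F = 0 .
\]
Dropping elements whose residual is numerically zero only removes vectors from the spanning set. It therefore cannot break this orthogonality.

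\textbf{Step 2 (extend to the span).} The orthogonal complement $V_k^{\perp}$ is a linear subspace, and by Step 1 it contains every element of $\mathcal{B}^{\perp G_k}$. It therefore contains $\mathrm{span}(\mathcal{B}^{\perp G_k})$, and by conjugate-linearity any $\mathbf{A}=\sum_k c_k B'_k$ satisfies $\langle \mathbf{A},\mathbf{P}_g\rangle_F = \sum_k c_k^*\langle B'_k,\mathbf{P}_g\rangle_F = 0$. This proves the first claim for all $g\in G_k$.

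\textbf{Step 3 (apply to the GEVP output).} Step~4 solves the DC-GEVP of Theorem~\ref{thm:dc_reduction} over the basis $\mathcal{B}^{\perp G_k}$. By the theorem's construction, the minimizer is $\mathbf{A}^*_{k+1}=\sum c^*_k B'_k$, which lies in $\mathrm{span}(\mathcal{B}^{\perp G_k})$. Normalizing to unit Frobenius norm is a scalar multiple, so it stays in the span. Equation~\eqref{eq:def_orth} then follows from Step 2.

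The only real subtlety, which I expect to be the main obstacle, is that the Gram matrix $\mathbf{G}$ for the deflated basis must be positive definite for the GEVP to be well posed. The deflated elements may be linearly dependent even when the original $B_j$ are not. I will handle this by taking $\mathcal{B}^{\perp G_k}$ to be any basis (for example, one extracted by QR) of the subspace $\Pi_k^{\perp}(\mathrm{span}\,\mathcal{B})$. This choice does not affect orthogonality, since only membership in $V_k^{\perp}$ is used. In exact arithmetic the tolerance-based dropping is exactly this extraction. If a nonzero tolerance is used, the statement is to be read up to that tolerance.
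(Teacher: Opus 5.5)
Your proof is correct and follows the paper's route: each deflated element is orthogonal to $\mathrm{span}\{\mathbf{P}_g : g\in G_k\}$ by construction, this extends by linearity to the whole span, and $\mathbf{A}^*_{k+1}$ lies in that span. Your explicit projector computation and the normalization check add detail the paper leaves implicit, and the Gram-matrix remark concerns well-posedness of the GEVP rather than the lemma itself.
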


\begin{proof}
Step~3 of Algorithm~\ref{alg:seqgevp} defines $\mathcal{B}^{\perp G_k}$ as the Frobenius-orthogonal complement of $\mathrm{span}\{\mathbf{P}_g : g \in G_k\}$ within $\mathrm{span}(\mathcal{B})$, so every element of $\mathcal{B}^{\perp G_k}$ has zero Frobenius inner product with every $\mathbf{P}_g$, $g \in G_k$. The same holds for any linear combination, including $\mathbf{A}^*_{k+1} \in \mathrm{span}(\mathcal{B}^{\perp G_k})$.
\end{proof}

\begin{lemma}[Forward Progress]\label{lem:forward}
Suppose $\max_{\sigma \in S_M} \langle \mathbf{A}^*_{k+1}, \mathbf{P}_\sigma\rangle_F > 0$ at iteration $k$. If Algorithm~\ref{alg:seqgevp} accepts $\sigma^*_{k+1}$ at step~9, then $\sigma^*_{k+1} \notin G_k$.
\end{lemma}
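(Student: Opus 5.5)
The argument is by contradiction, combining Lemma~\ref{lem:def_orth} with the way the rounding step selects $\sigma^*_{k+1}$. Suppose the algorithm accepts $\sigma^*_{k+1}$ at step~9 and yet $\sigma^*_{k+1} \in G_k$.

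Step~5 defines $\sigma^*_{k+1}$ as a maximizer of $\sigma \mapsto \langle \mathbf{A}^*_{k+1}, \mathbf{P}_\sigma\rangle_F$ over all of $S_M$. The hypothesis of the lemma then gives
\[
\langle \mathbf{A}^*_{k+1}, \mathbf{P}_{\sigma^*_{k+1}}\rangle_F = \max_{\sigma \in S_M} \langle \mathbf{A}^*_{k+1}, \mathbf{P}_\sigma\rangle_F > 0 .
\]
On the other hand, $\mathbf{A}^*_{k+1}$ lies in $\mathrm{span}(\mathcal{B}^{\perp G_k})$, so Lemma~\ref{lem:def_orth} (equation~\eqref{eq:def_orth}) gives $\langle \mathbf{A}^*_{k+1}, \mathbf{P}_g\rangle_F = 0$ for every $g \in G_k$. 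Taking $g = \sigma^*_{k+1}$ produces $0 > 0$, which is a contradiction. Hence $\sigma^*_{k+1} \notin G_k$.

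Acceptance at step~9 is only used to place us in the branch where $G_{k+1}$ is formed. The conclusion already holds for the rounded permutation itself, so the threshold $\tau$ plays no role.

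The main obstacle is a bookkeeping point about the inner product, not any depth in the argument. In the complex case, $\langle \cdot,\cdot\rangle_F$ may be complex-valued, so ``$>0$'' and the $\arg\max$ must be read through the real part $\mathrm{Re}\,\Tr(\mathbf{A}^*\mathbf{P}_\sigma)$, which is what the Hungarian step on cost matrix $-\mathbf{A}^*_{k+1}$ actually optimizes. I would state this convention explicitly. The contradiction is unaffected, because orthogonality makes the full complex inner product zero, and so its real part is zero as well.

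A second check is needed when the deflated basis is empty. Then $\mathbf{A}^*_{k+1}$ is undefined (or zero), and the hypothesis $\max_\sigma \langle \mathbf{A}^*_{k+1}, \mathbf{P}_\sigma\rangle_F > 0$ fails, so this degenerate case is excluded by the lemma's hypothesis.
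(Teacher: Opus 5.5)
Your proof is correct and is essentially the paper's argument: Lemma~\ref{lem:def_orth} gives zero overlap with every $\mathbf{P}_g$, $g \in G_k$, while the hypothesis gives the Hungarian maximizer strictly positive overlap. The paper phrases it directly, concluding $\mathbf{P}_{\sigma^*_{k+1}} \neq \mathbf{P}_g$ and then invoking injectivity of $\sigma \mapsto \mathbf{P}_\sigma$; your contradiction form, which takes $g = \sigma^*_{k+1}$, makes that injectivity step unnecessary.
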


\begin{proof}
By Lemma~\ref{lem:def_orth}, $\langle \mathbf{A}^*_{k+1}, \mathbf{P}_g\rangle_F = 0$ for every $g \in G_k$. By the positive-overlap hypothesis, $\langle \mathbf{A}^*_{k+1}, \mathbf{P}_{\sigma^*_{k+1}}\rangle_F = \max_\sigma\langle \mathbf{A}^*_{k+1}, \mathbf{P}_\sigma\rangle_F > 0$. Therefore $\mathbf{P}_{\sigma^*_{k+1}} \neq \mathbf{P}_g$ for any $g \in G_k$. The map $\sigma \mapsto \mathbf{P}_\sigma$ is injective on $S_M$ (distinct permutations have distinct support patterns), so $\sigma^*_{k+1} \neq g$ for any $g \in G_k$.
\end{proof}

\begin{remark}[On the positive-overlap hypothesis]
The hypothesis $\max_\sigma \langle \mathbf{A}^*_{k+1}, \mathbf{P}_\sigma\rangle_F > 0$ holds whenever the deflated search subspace contains a matrix with at least one strictly positive entry, since $\langle \mathbf{A}, \mathbf{P}_\sigma\rangle_F = \sum_i A_{i,\sigma(i)}$ and a single positive entry $A_{ij}$ contributes a positive term to the inner product for every $\sigma$ with $\sigma(i) = j$. The hypothesis is straightforward to verify for bases used in practice, including bases of permutation matrices and bases of permutation-difference matrices $\mathbf{P}_\sigma - \mathbf{I}$.
\end{remark}

\begin{theorem}[Strict Subgroup Growth]\label{thm:strict_growth}
Under the hypothesis of Lemma~\ref{lem:forward}, at every accepted iteration of Algorithm~\ref{alg:seqgevp},
\begin{equation}\label{eq:strict_growth}
|G_{k+1}| > |G_k|.
\end{equation}
\end{theorem}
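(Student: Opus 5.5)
The plan is to combine Lemma~\ref{lem:forward} with the elementary fact that a subgroup of a finite group contains every subgroup it is generated over. At an accepted iteration, step~9 of Algorithm~\ref{alg:seqgevp} sets $G_{k+1} = \langle G_k, \sigma^*_{k+1}\rangle$, the smallest subgroup of $S_M$ containing both $G_k$ and $\sigma^*_{k+1}$. By construction, $G_k \subseteq G_{k+1}$ and $\sigma^*_{k+1} \in G_{k+1}$.

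Next I will invoke Lemma~\ref{lem:forward}. Under the positive-overlap hypothesis, an accepted $\sigma^*_{k+1}$ satisfies $\sigma^*_{k+1} \notin G_k$. Hence $\sigma^*_{k+1} \in G_{k+1} \setminus G_k$, so the containment $G_k \subseteq G_{k+1}$ is proper. Since all groups involved are subgroups of the finite group $S_M$, a proper inclusion of finite sets gives $|G_{k+1}| > |G_k|$.

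To set up Corollary~\ref{cor:iter_bound}, I will also record the sharper statement $|G_{k+1}| \geq 2|G_k|$. This follows from Lagrange's theorem: $G_k$ is a proper subgroup of $G_{k+1}$, so $|G_k|$ divides $|G_{k+1}|$ with index at least $2$.

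There is no real obstacle here. The only point needing care is that Lemma~\ref{lem:forward} is applied at exactly the accepted iterations, since those are the only ones at which $G_{k+1}$ is defined. At a rejected iteration the algorithm returns $G_k$ and no growth claim is made.
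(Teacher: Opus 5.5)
Your proof is correct and essentially the paper's argument. Lemma~\ref{lem:forward} gives $\sigma^*_{k+1}\notin G_k$, so $G_{k+1}=\langle G_k,\sigma^*_{k+1}\rangle$ properly contains $G_k$ and hence has strictly larger order; your Lagrange refinement $|G_{k+1}|\geq 2|G_k|$ is exactly what the paper uses next, in the proof of Corollary~\ref{cor:iter_bound}.
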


\begin{proof}
By Lemma~\ref{lem:forward}, $\sigma^*_{k+1} \notin G_k$. Step~9 sets $G_{k+1} = \langle G_k, \sigma^*_{k+1}\rangle$, which contains $G_k$ and the element $\sigma^*_{k+1} \notin G_k$, so $G_k \subsetneq G_{k+1}$ as subgroups of $S_M$, hence $|G_{k+1}| > |G_k|$.
\end{proof}

\begin{corollary}[Iteration Bound]\label{cor:iter_bound}
Under the hypothesis of Lemma~\ref{lem:forward}, Algorithm~\ref{alg:seqgevp} performs at most
\begin{equation}\label{eq:iter_bound}
K \leq \lceil \log_2 |G_K|\rceil
\end{equation}
accepted iterations, where $G_K$ is the discovered subgroup at termination. In particular, $K \leq \lceil \log_2 M!\rceil = O(M \log M)$.
\end{corollary}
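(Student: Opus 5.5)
The plan is to turn strict growth (Theorem~\ref{thm:strict_growth}) into a doubling statement using Lagrange's theorem, and then take logarithms. Write $G_0 = \{e\} \subsetneq G_1 \subsetneq \cdots \subsetneq G_K$ for the chain of subgroups produced by the $K$ accepted iterations. By Theorem~\ref{thm:strict_growth} each inclusion $G_k \subsetneq G_{k+1}$ is proper. All $G_k$ are subgroups of $S_M$, and $G_k$ is a subgroup of $G_{k+1}$ because $G_{k+1} = \langle G_k, \sigma^*_{k+1}\rangle \supseteq G_k$. Lagrange's theorem then says $|G_k|$ divides $|G_{k+1}|$.

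Combining divisibility with strictness gives the key step: the index $[G_{k+1}:G_k]$ is an integer greater than $1$, so it is at least $2$. Hence
\begin{equation*}
|G_{k+1}| \geq 2|G_k| \quad \text{for every accepted iteration } k.
\end{equation*}
Inducting from $|G_0| = 1$ yields $|G_K| \geq 2^K$, that is, $K \leq \log_2 |G_K| \leq \lceil \log_2 |G_K| \rceil$. This is the stated bound. It is in fact slightly stronger: since $K$ is an integer, $K \leq \lfloor \log_2 |G_K| \rfloor$.

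For the second claim, note $G_K \subseteq S_M$, so $|G_K| \leq M!$ and $K \leq \lceil \log_2 M! \rceil$. Stirling's bound $\log_2 M! \leq M \log_2 M$ (from $M! \leq M^M$) then gives $O(M \log M)$.

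The only point needing care is the first one. Strict growth alone would only give $|G_{k+1}| \geq |G_k| + 1$, which is far too weak. The argument therefore depends on Lagrange divisibility, which in turn requires that $G_k$ really be a subgroup of $G_{k+1}$; the construction in step~9 guarantees this.

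I would also state explicitly that the count is of accepted iterations only. The hypothesis of Lemma~\ref{lem:forward} is assumed at each of them, so Theorem~\ref{thm:strict_growth} applies at every step of the chain. The rejecting iteration, if there is one, adds no new subgroup, and the cap $K_{\max}$ can only make $K$ smaller.
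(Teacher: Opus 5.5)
Your proof is correct and follows the paper's argument essentially verbatim: strict growth plus Lagrange gives $[G_{k+1}:G_k]\ge 2$, iterating from $|G_0|=1$ gives $|G_K|\ge 2^K$, and $|G_K|\le M!$ yields the $O(M\log M)$ bound. Your two small departures are harmless refinements: you note that integrality actually gives $K\le\lfloor\log_2|G_K|\rfloor$, and you use the elementary bound $M!\le M^M$ in place of the paper's Stirling estimate.
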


\begin{proof}
By Theorem~\ref{thm:strict_growth}, $G_k \subsetneq G_{k+1}$ at each accepted iteration. Lagrange's theorem applied to the proper inclusion $G_k \subsetneq G_{k+1}$ gives $[G_{k+1} : G_k] \geq 2$, hence $|G_{k+1}| \geq 2|G_k|$. Iterating, $|G_K| \geq 2^K$, so $K \leq \log_2 |G_K|$, and since $K$ is an integer, $K \leq \lceil \log_2 |G_K|\rceil$. The bound $|G_K| \leq M!$ together with the Stirling estimate $\log_2 M! = M \log_2 M - M\log_2 e + O(\log M)$ gives $K = O(M \log M)$.
\end{proof}

\begin{theorem}[Generic Convergence at $\tau = 0$]\label{thm:gen_conv}
Run Algorithm~\ref{alg:seqgevp} with acceptance threshold $\tau = 0$. Then every accepted permutation $\sigma^*_{k+1}$ satisfies $\sigma^*_{k+1} \in \mathrm{Aut}(\mathbf{R})$, and the discovered subgroup at termination satisfies
\begin{equation}\label{eq:gen_conv}
G_K \;\subseteq\; \mathrm{Aut}(\mathbf{R}).
\end{equation}
\end{theorem}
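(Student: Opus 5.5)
The plan is to prove the two claims in order. Membership of each accepted permutation comes straight from the acceptance test. The subgroup inclusion then follows by induction using closure of $\mathrm{Aut}(\mathbf{R})$.

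\emph{Step 1 (acceptance implies exact commutation).} With $\tau = 0$, step~6 of Algorithm~\ref{alg:seqgevp} returns unless $\delta(\mathbf{P}_{\sigma^*_{k+1}}, \mathbf{R}) \le 0$. Since $\delta$ is a ratio of Frobenius norms, it is non-negative, so acceptance forces $\delta(\mathbf{P}_{\sigma^*_{k+1}}, \mathbf{R}) = 0$. The denominator $\|\mathbf{P}_\sigma\|_F \|\mathbf{R}\|_F = \sqrt{M}\,\|\mathbf{R}\|_F$ is finite and nonzero, assuming $\mathbf{R} \neq \mathbf{0}$. If $\mathbf{R} = \mathbf{0}$, every permutation commutes with it and $\mathrm{Aut}(\mathbf{R}) = S_M$, so the claim is trivial; I will treat this case separately, or adopt the convention $\delta = 0$. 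Hence $\|[\mathbf{P}_{\sigma^*_{k+1}}, \mathbf{R}]\|_F = 0$, i.e. $\mathbf{P}_{\sigma^*_{k+1}}\mathbf{R} = \mathbf{R}\mathbf{P}_{\sigma^*_{k+1}}$, and by definition~\eqref{eq:autR} we get $\sigma^*_{k+1} \in \mathrm{Aut}(\mathbf{R})$. Equivalently, Lemma~\ref{lem:dc_nonneg} gives $\Tr(\mathbf{P}^*[\mathbf{R},[\mathbf{R},\mathbf{P}]]) = 0$ exactly when the commutator vanishes. Note that nothing about the GEVP solution or the rounding step is needed: soundness is enforced entirely by the acceptance check.

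\emph{Step 2 ($\mathrm{Aut}(\mathbf{R})$ is a subgroup).} This is the only point needing a short argument, since \eqref{eq:autR} calls it a subgroup but does not verify it. The identity is in $\mathrm{Aut}(\mathbf{R})$ because $\mathbf{P}_e = \mathbf{I}$. For closure under products, use $\mathbf{P}_{\sigma\pi} = \mathbf{P}_\sigma\mathbf{P}_\pi$ (with the paper's convention for the homomorphism $\sigma\mapsto\mathbf{P}_\sigma$). Then
\[
\mathbf{P}_\sigma\mathbf{P}_\pi\mathbf{R} = \mathbf{P}_\sigma\mathbf{R}\mathbf{P}_\pi = \mathbf{R}\mathbf{P}_\sigma\mathbf{P}_\pi .
\]
For inverses, use $\mathbf{P}_{\sigma^{-1}} = \mathbf{P}_\sigma^{-1}$: multiplying $\mathbf{P}_\sigma\mathbf{R} = \mathbf{R}\mathbf{P}_\sigma$ on both sides by $\mathbf{P}_\sigma^{-1}$ gives $\mathbf{R}\mathbf{P}_\sigma^{-1} = \mathbf{P}_\sigma^{-1}\mathbf{R}$. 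In a finite group, closure under products already suffices.

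\emph{Step 3 (induction on $k$).} The base case is $G_0 = \{e\} \subseteq \mathrm{Aut}(\mathbf{R})$. For the inductive step, suppose $G_k \subseteq \mathrm{Aut}(\mathbf{R})$ and iteration $k+1$ is accepted. By Step~1, $\sigma^*_{k+1} \in \mathrm{Aut}(\mathbf{R})$. Since $G_{k+1} = \langle G_k, \sigma^*_{k+1}\rangle$ is the smallest subgroup containing $G_k \cup \{\sigma^*_{k+1}\}$, and by Step~2 $\mathrm{Aut}(\mathbf{R})$ is a subgroup containing that set, we get $G_{k+1} \subseteq \mathrm{Aut}(\mathbf{R})$.

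Termination happens in one of two ways: a rejection returns $G_k$ unchanged, or the cap $K_{\max}$ is reached, which returns the last accepted $G_k$. Either way the returned $G_K$ satisfies \eqref{eq:gen_conv}. This step needs no positive-overlap hypothesis, because termination is guaranteed by $K_{\max}$ (and, under Corollary~\ref{cor:iter_bound}, also bounded by it).

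The main obstacle is interpretive rather than technical. I must make sure the acceptance test is an exact test of $\delta = 0$ for the rounded permutation, not for the continuous generator $\mathbf{A}^*_{k+1}$. Algorithm~\ref{alg:seqgevp} indeed evaluates $\delta$ at $\mathbf{P}_{\sigma^*_{k+1}}$, so the argument holds. In floating point, ``$\tau=0$'' should be read as exact arithmetic; I will remark on this rather than prove any finite-threshold statement, which the paper lists as open.
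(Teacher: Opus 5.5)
Your proposal is correct and follows the paper's proof: acceptance at $\tau = 0$ forces $\delta(\mathbf{P}_{\sigma^*_{k+1}}, \mathbf{R}) = 0$, which gives exact commutation and hence membership in $\mathrm{Aut}(\mathbf{R})$, and closure of $\mathrm{Aut}(\mathbf{R})$ under composition and inversion then yields $G_K \subseteq \mathrm{Aut}(\mathbf{R})$. Your additions (the $\mathbf{R} = \mathbf{0}$ edge case, the explicit subgroup check, and the induction on $k$) only make explicit details that the paper leaves implicit.
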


\begin{proof}
The acceptance criterion at step~6 reads $\delta(\mathbf{P}_{\sigma^*_{k+1}}, \mathbf{R}) \leq \tau = 0$, equivalent to $\|[\mathbf{P}_{\sigma^*_{k+1}}, \mathbf{R}]\|_F = 0$, equivalent to $\mathbf{P}_{\sigma^*_{k+1}} \mathbf{R} = \mathbf{R} \mathbf{P}_{\sigma^*_{k+1}}$, the defining condition $\sigma^*_{k+1} \in \mathrm{Aut}(\mathbf{R})$ from~\eqref{eq:autR}. Because $\mathrm{Aut}(\mathbf{R})$ is closed under composition and inversion, it contains the subgroup $G_K = \langle\sigma^*_1,\ldots,\sigma^*_K\rangle$ generated by the accepted elements.
\end{proof}

\paragraph{Soundness vs.\ completeness.}
The inclusion~\eqref{eq:gen_conv} is one-sided: every accepted permutation is a genuine automorphism of $\mathbf{R}$ (\emph{soundness}), but the procedure may terminate with $G_K$ a proper subgroup of $\mathrm{Aut}(\mathbf{R})$ (\emph{completeness} is not guaranteed). The gap arises because the rounding step in line~5 of Algorithm~\ref{alg:seqgevp} operates on the deflation residual $\mathbf{A}^*_{k+1}$ rather than directly on a candidate permutation matrix. When $\mathrm{Aut}(\mathbf{R}) = S_M$, every Hungarian-rounded permutation lies in $\mathrm{Aut}(\mathbf{R})$ and the rejection test never fires; the procedure recovers all of $\mathrm{Aut}(\mathbf{R})$ trivially. When $\mathrm{Aut}(\mathbf{R})$ is a proper subgroup of $S_M$, however, the deflation residual at iteration $k+1$ may round to a permutation outside $\mathrm{Aut}(\mathbf{R})$, terminating the procedure prematurely. A representative trace appears as Example~\ref{ex:c6} below.

\begin{example}[Partial recovery on the 6-cycle]\label{ex:c6}
Take $\mathbf{R} = (\mathbf{I} + \mathbf{L}_{C_6})^{-1}$, the diffusion covariance of the cycle $C_6$, for which $\mathrm{Aut}(\mathbf{R}) = D_6$ of order~$12$. Run Algorithm~\ref{alg:seqgevp} at threshold $\tau = 0$ with basis
\[
\mathcal{B} = \{\mathbf{P}_\tau - \mathbf{I},\; \mathbf{P}_{\tau^2} - \mathbf{I},\; \mathbf{P}_\rho - \mathbf{I},\; \mathbf{P}_\eta - \mathbf{I}\},
\]
where $\tau = (1\,2\,3\,4\,5\,6)$ is the cyclic shift, $\tau^2 = (1\,3\,5)(2\,4\,6)$, $\rho = (1\,6)(2\,5)(3\,4)$ is a reflection in $D_6$, and $\eta = (1\,3)$ is a transposition outside $\mathrm{Aut}(\mathbf{R})$. At iteration~1 the GEVP returns $\lambda_{\min} = 0$ (to machine precision) and Hungarian rounding produces $\sigma^*_1 = \tau$, accepted; the discovered subgroup becomes $G_1 = \langle\tau\rangle$ of order~$6$. At iteration~2 the deflated basis contains the residuals of $\mathbf{P}_\rho - \mathbf{I}$ and $\mathbf{P}_\eta - \mathbf{I}$ orthogonal to $\mathrm{span}\{\mathbf{P}_g : g \in \langle\tau\rangle\}$; the GEVP correctly returns $\lambda_{\min} = 0$ at the residual of $\mathbf{P}_\rho - \mathbf{I}$, but Hungarian rounding of this residual lands on a permutation outside $\mathrm{Aut}(\mathbf{R})$, the acceptance test rejects, and the algorithm terminates with $G_K = \langle\tau\rangle \subsetneq D_6$. The four named correctness results all hold on this trace: Forward Progress gives $\sigma^*_1 = \tau \notin G_0$; Strict Subgroup Growth gives $|G_1| = 6 > 1$; the Iteration Bound reads $K = 1 \leq \lceil\log_2 6\rceil = 3$; Generic Convergence gives $G_K = \langle\tau\rangle \subseteq D_6$. The recovered subgroup is genuine but proper.
\end{example}

The example illustrates a basis-design failure mode: at iteration $k+1$, the deflation residual $\mathbf{A}^*_{k+1}$ may have $\delta(\mathbf{A}^*_{k+1}, \mathbf{R}) = 0$ on the deflated search subspace, yet Hungarian rounding maps it to a permutation outside $\mathrm{Aut}(\mathbf{R})$, prematurely terminating the procedure. The complete picture of what is and is not recoverable from $\mathbf{R}$ alone, the role of the basis design within that picture, and the four open problems (O1)--(O4) that remain after Theorems~\ref{thm:lattice_insens} and~\ref{thm:gen_id} of the Discussion are taken up in Section~\ref{sec:discussion} below.

\section{Connections to Known Problems}\label{sec:connections}

\subsection{Relationship to JADE}

The JADE algorithm~\cite{cardoso1993jade} for independent component analysis seeks a unitary matrix $\mathbf{U}$ that jointly diagonalizes a set of fourth-order cumulant matrices $\{\mathbf{K}_1, \ldots, \mathbf{K}_p\}$. This can be expressed as minimizing $\sum_{k=1}^p \|\text{off}(\mathbf{U}^* \mathbf{K}_k \mathbf{U})\|_F^2$, where $\text{off}(\cdot)$ zeros the diagonal.

The double-commutator formulation addresses the same structural question, finding a transform that diagonalizes a matrix, but differs in three key respects: (i)~it operates on the covariance (second order) rather than cumulants (fourth order), requiring weaker distributional assumptions; (ii)~it searches over group generators rather than arbitrary unitaries, constraining the solution to have algebraic structure; and (iii)~the minimum eigenvalue provides a certifiable gap that JADE's iterative rotation lacks.

\subsection{Relationship to Structured Matrix Nearness}

Shah and Chandrasekaran~\cite{shah2012group} studied the problem $\min_{\mathbf{S} \in \mathcal{S}_G} \|\mathbf{R} - \mathbf{S}\|_F$, where $\mathcal{S}_G$ is the set of matrices invariant under the action of group $G$. This finds the nearest $G$-invariant matrix to $\mathbf{R}$.

The double-commutator formulation is the \emph{dual} problem: rather than projecting $\mathbf{R}$ onto a fixed group's invariant subspace, it finds the group whose invariant subspace best contains $\mathbf{R}$. The duality is exact: $\delta(G, \mathbf{R}) = 0$ iff $\mathbf{R} \in \mathcal{S}_G$, i.e., iff the projection residual is zero.

\subsection{Relationship to Simultaneous Diagonalization}

The condition $[\mathbf{A}, \mathbf{R}] = \mathbf{0}$ is equivalent to the statement that $\mathbf{A}$ and $\mathbf{R}$ are simultaneously diagonalizable. The double-commutator GEVP finds the element of a linear subspace that is closest to simultaneously diagonalizable with $\mathbf{R}$, measured in the Frobenius norm. This specializes the general simultaneous diagonalization problem~\cite{bunse1993numerical} to the algebraically structured case where the candidate is constrained to a generator basis, and provides the closed-form solution that iterative Jacobi methods lack.

\section{Experimental Validation}\label{sec:experiments}

We validate the DC-GEVP on two problems: graph automorphism identification and blind chirp rate estimation.

\subsection{Graph Automorphism Identification}

The Automorphism Characterization Theorem~\cite{thornton2026ad} states that a permutation $\sigma$ is a graph automorphism if and only if $\delta(\sigma, \mathbf{L}) = 0$, where $\mathbf{L}$ is the graph Laplacian. The single-generator DC-GEVP provides a constructive test: apply Algorithm~\ref{alg:dc} with a generic basis of permutation generators (cyclic shift, reflection, transposition, block swap, and 3-cycle) to the graph-diffusion covariance $\mathbf{R} = e^{-\beta\mathbf{L}}$.

Figure~\ref{fig:graph_aut_1} and Figure~\ref{fig:graph_aut_2} show the commutativity residual $\delta$ for five generic generators tested against six graphs spanning trivial to large automorphism groups. In all six cases, the generator with minimum $\delta$ is a graph automorphism, and the separation between automorphisms ($\delta = 0$) and non-automorphisms ($\delta > 0$) is unambiguous. The single-generator DC-GEVP therefore identifies \emph{at least one} graph automorphism per case, without any graph-specific knowledge, using only a generic basis of candidate generators. For the Abelian-symmetry graph in the panel ($P_6$ with $\mathrm{Aut}(\mathcal{G}) = \mathbb{Z}_2$), the discovered generator generates the full automorphism group by powers, so the single-generator output coincides with full recovery. For the non-Abelian-symmetry graphs in the panel ($K_4$ with $\mathrm{Aut}(\mathcal{G}) = S_4$ of order $24$, $C_6$ and the triangular prism with $\mathrm{Aut}(\mathcal{G}) = D_6$ of order $12$, $K_3$ with $\mathrm{Aut}(\mathcal{G}) = S_3$ of order $6$, the star $S_5$ with $\mathrm{Aut}(\mathcal{G}) = S_4$ of order $24$), the discovered generator is one element of an automorphism group whose full description requires multiple non-commuting generators; the single-generator DC-GEVP does not by itself recover the entire automorphism group in these cases. Full recovery of $\mathrm{Aut}(\mathcal{G})$ is the subject of Algorithm~\ref{alg:seqgevp} (Sequential GEVP) of Section~\ref{sec:seqgevp}, with the soundness-without-completeness characterization recorded there as Theorem~\ref{thm:gen_conv} and Example~\ref{ex:c6}.

\begin{figure*}[t]
\centering
\includegraphics[width=0.32\textwidth]{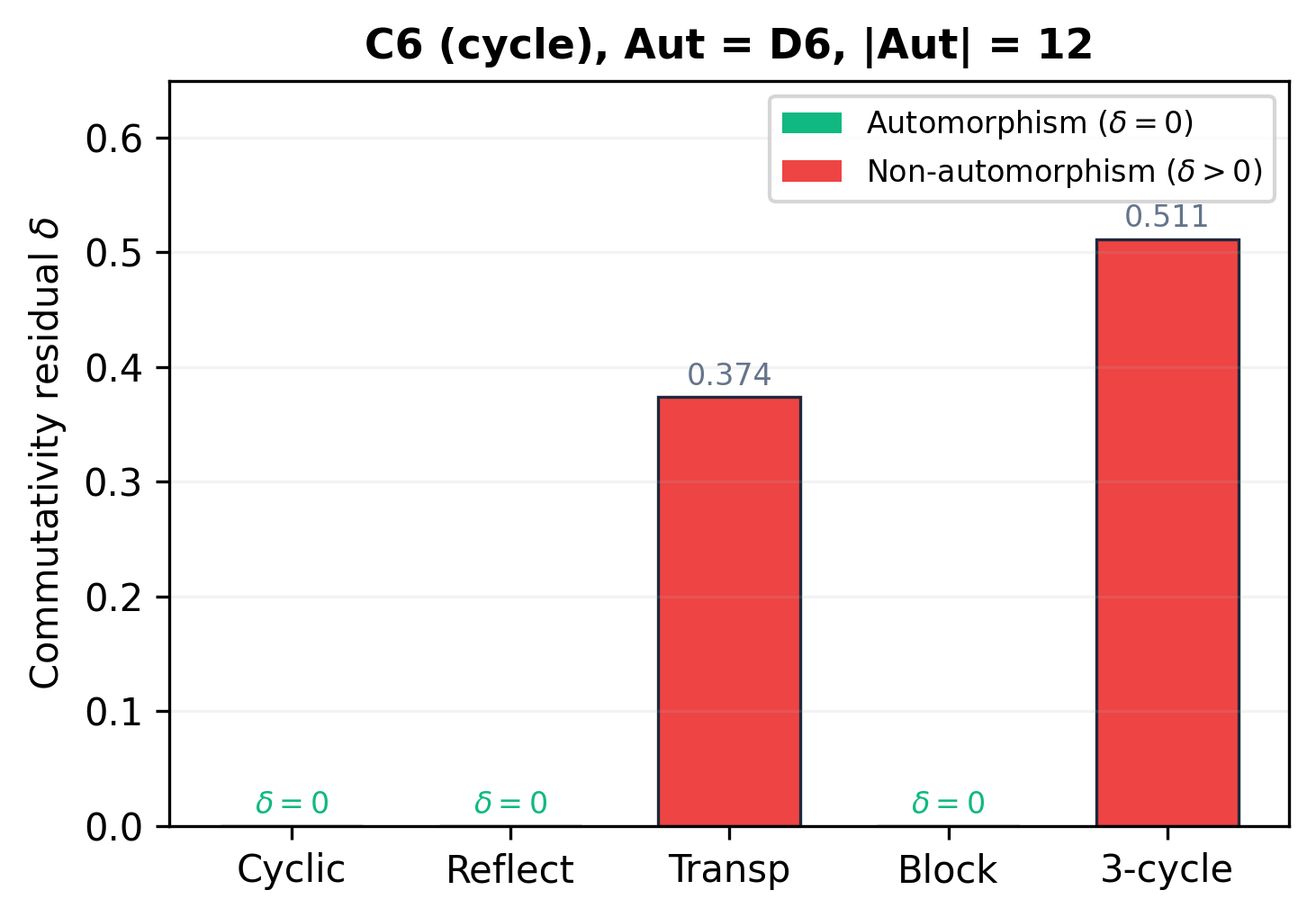}\hfill
\includegraphics[width=0.32\textwidth]{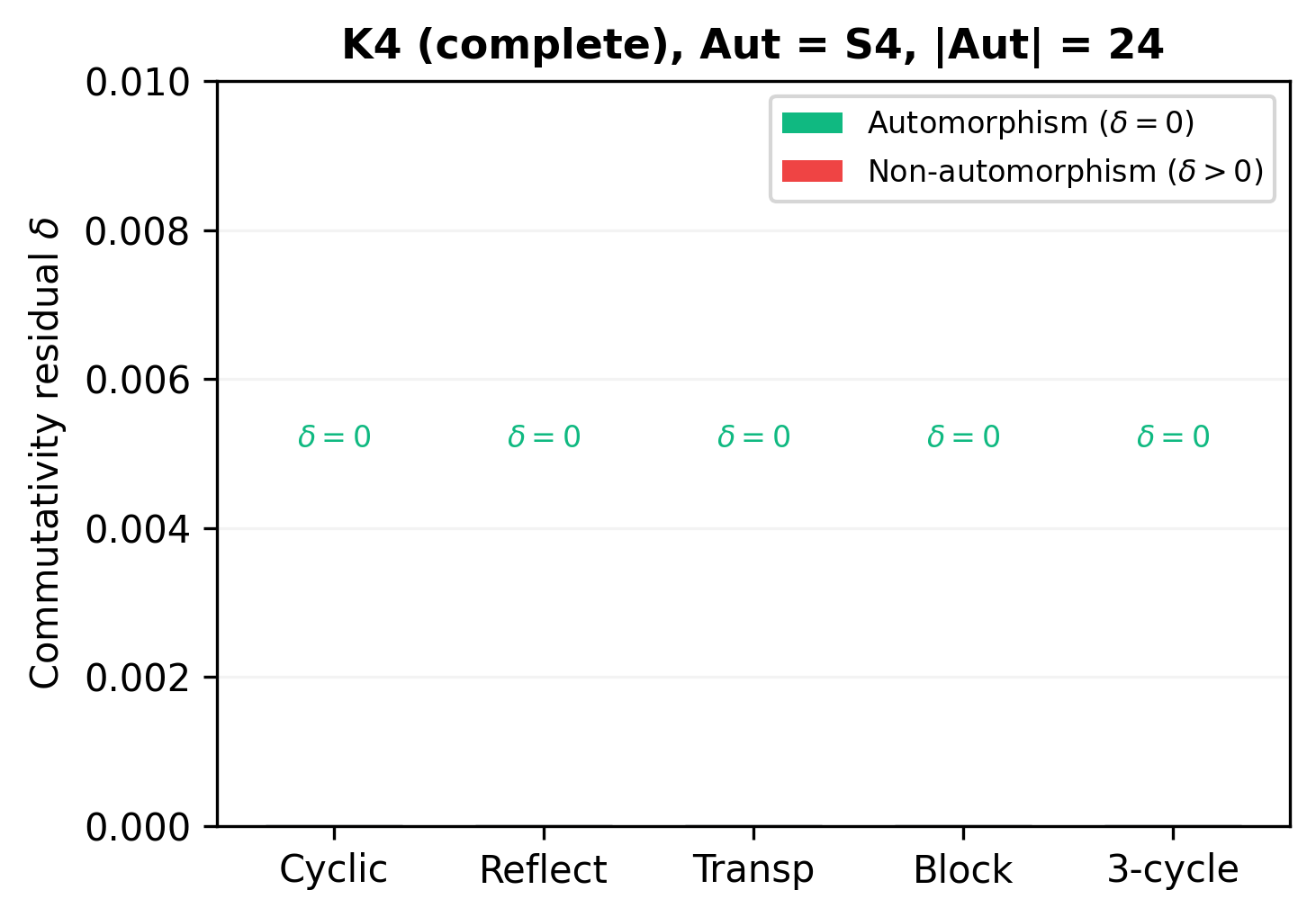}\hfill
\includegraphics[width=0.32\textwidth]{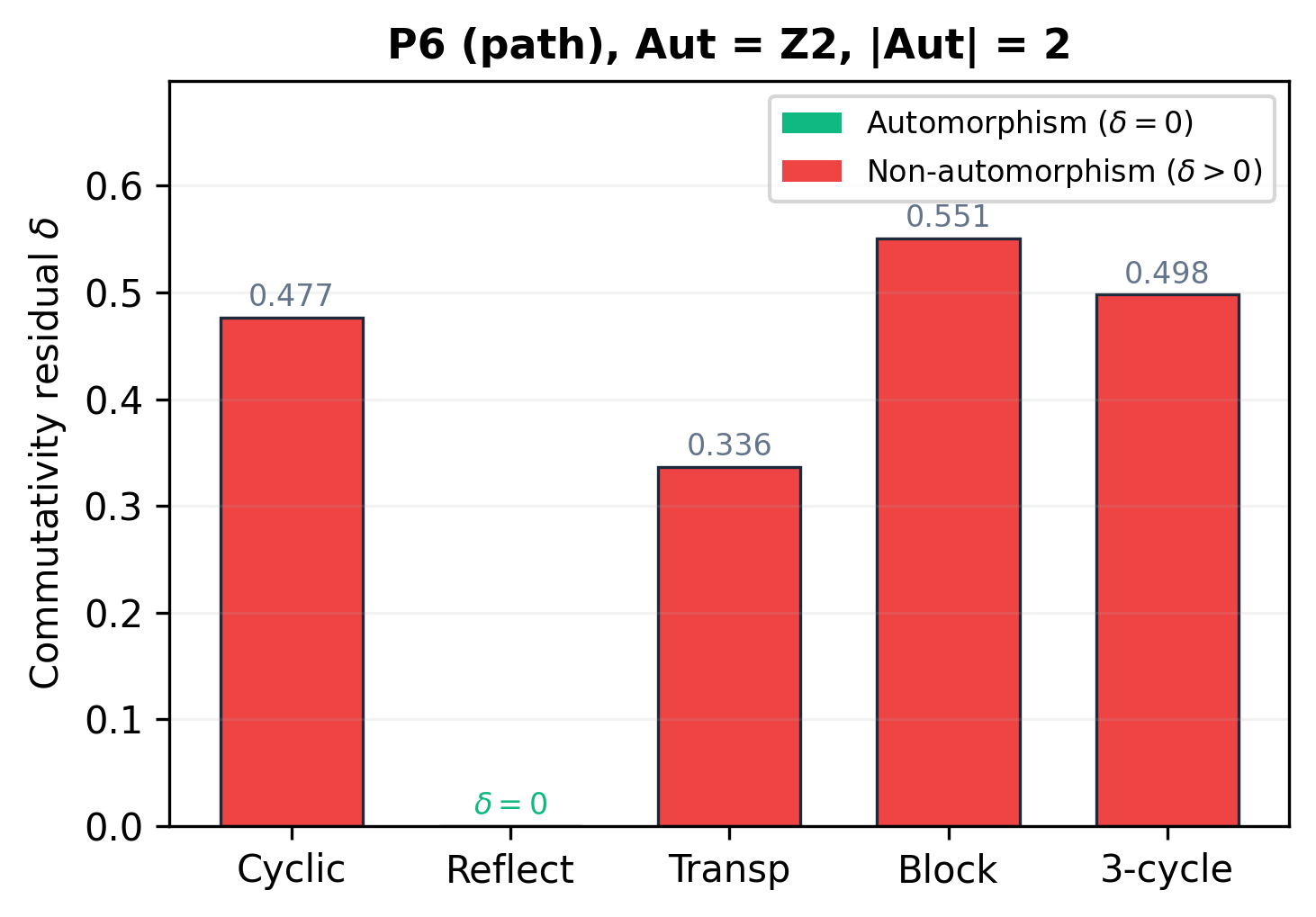}
\caption{Single-generator DC-GEVP graph automorphism identification (Part 1). Commutativity residual $\delta$ for five generic permutation generators on three graphs: cycle $C_6$ ($D_6$, $|\mathrm{Aut}| = 12$), complete $K_4$ ($S_4$, $|\mathrm{Aut}| = 24$), and path $P_6$ ($\mathbb{Z}_2$, $|\mathrm{Aut}| = 2$). Green bars: generators that are graph automorphisms ($\delta = 0$). Red bars: non-automorphisms ($\delta > 0$). The minimum-$\delta$ generator is an automorphism in every case shown. No graph-specific information is used. The single-generator DC-GEVP identifies one automorphism per graph; full $\mathrm{Aut}(\mathcal{G})$ recovery for the non-Abelian-symmetry cases ($C_6$ and $K_4$) is the subject of the Sequential GEVP (Algorithm~\ref{alg:seqgevp}, Section~\ref{sec:seqgevp}).}
\label{fig:graph_aut_1}
\end{figure*}

\begin{figure*}[t]
\centering
\includegraphics[width=0.32\textwidth]{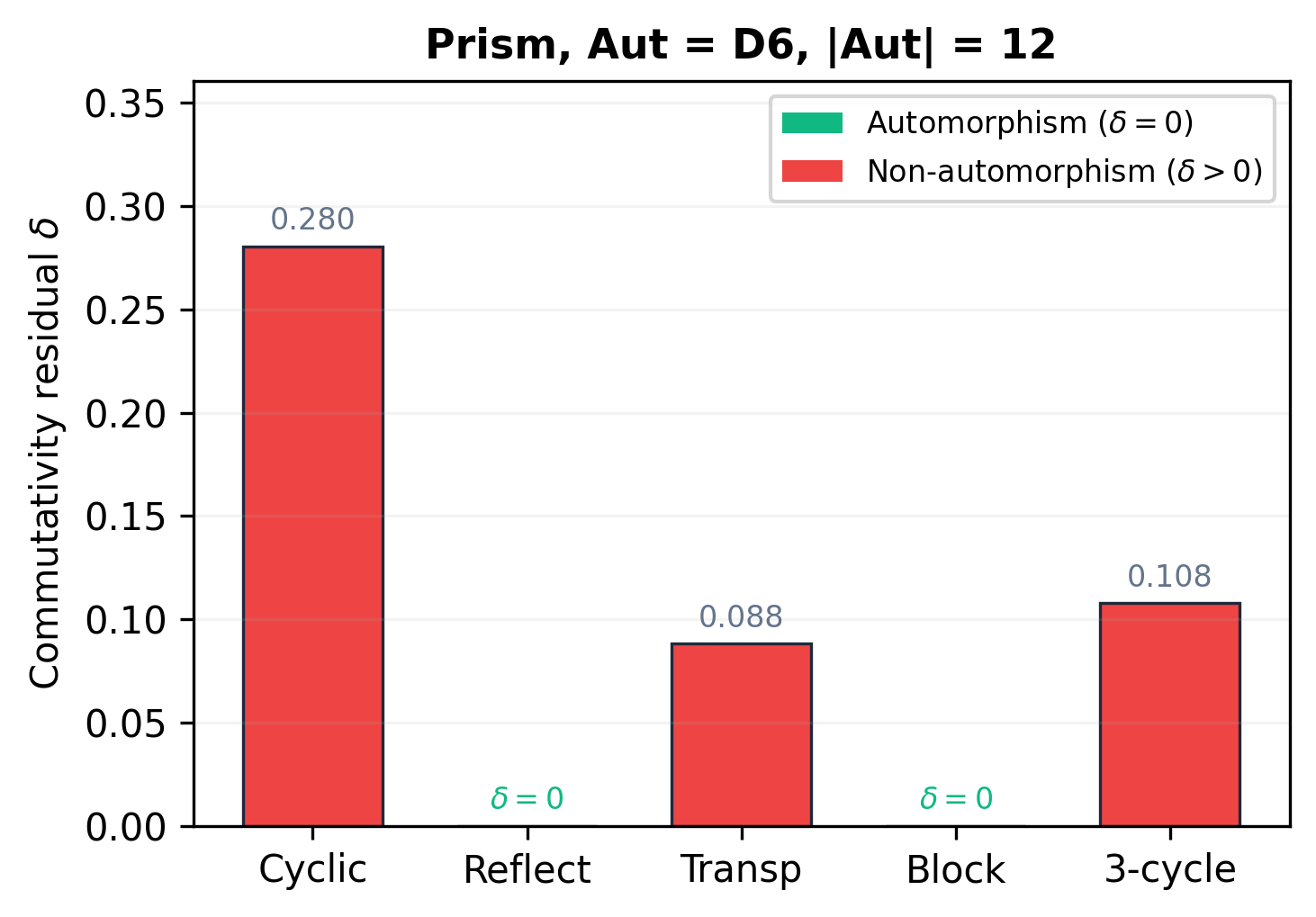}\hfill
\includegraphics[width=0.32\textwidth]{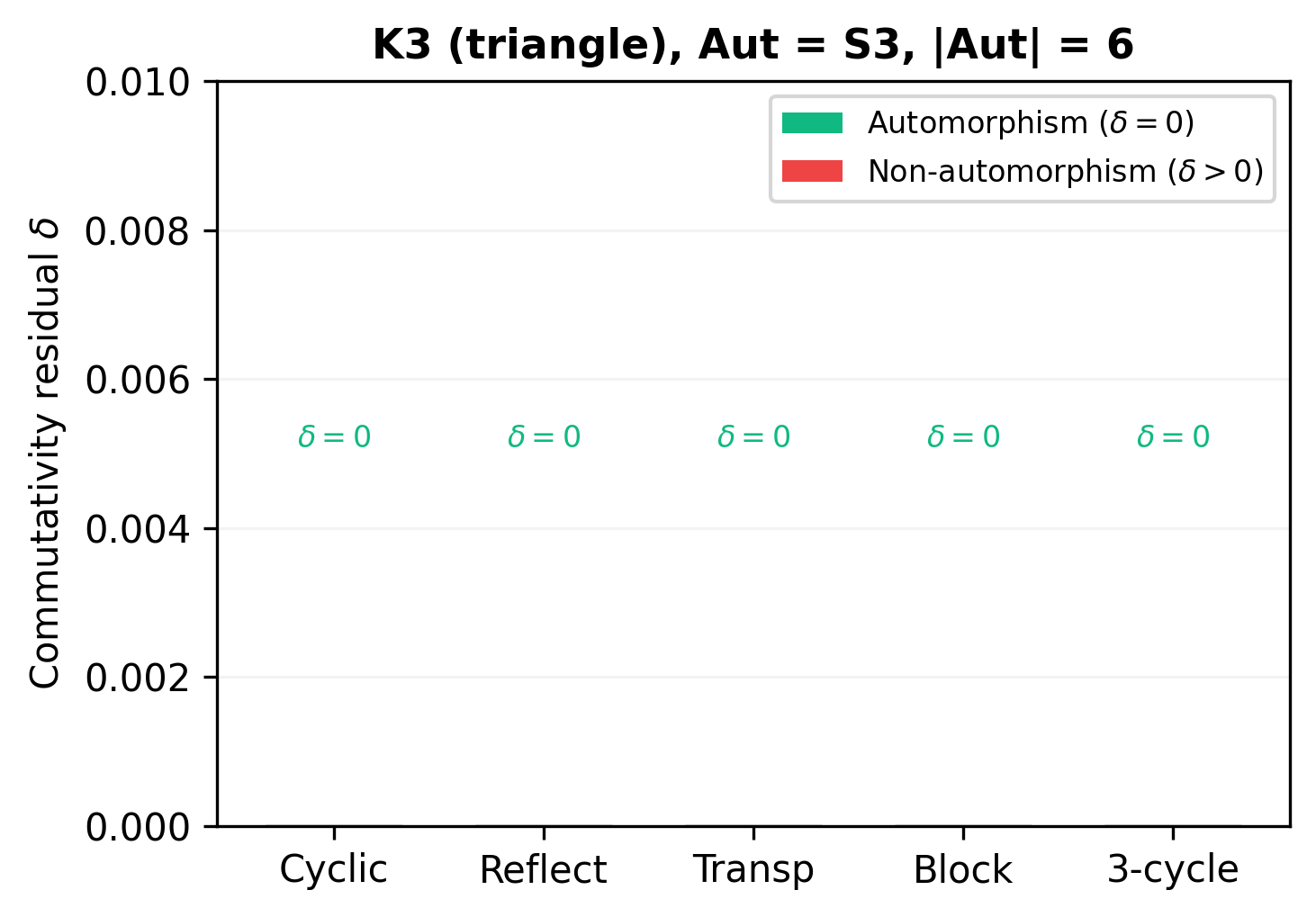}\hfill
\includegraphics[width=0.32\textwidth]{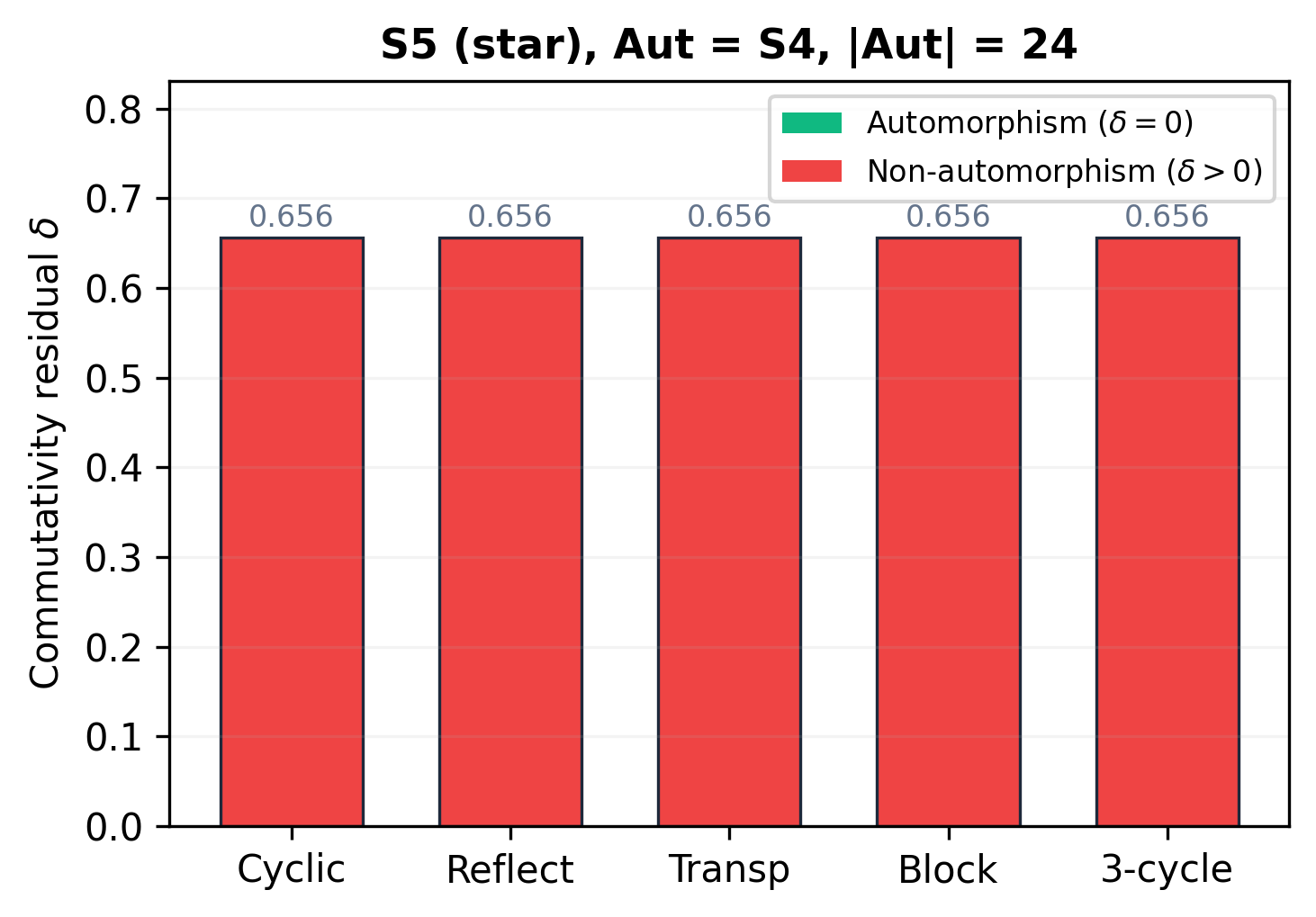}
\caption{Single-generator DC-GEVP graph automorphism identification (Part 2). Triangular prism ($D_6$, $|\mathrm{Aut}| = 12$), complete $K_3$ ($S_3$, $|\mathrm{Aut}| = 6$), and star $S_5$ ($S_4$, $|\mathrm{Aut}| = 24$). The pattern is consistent: at least one automorphism is identified with $\delta = 0$, non-automorphisms have $\delta > 0$, and the separation is unambiguous in every case. This validates the Automorphism Characterization Theorem constructively at the single-generator level. All three graphs have non-Abelian automorphism groups, so the discovered generator is one element of an $\mathrm{Aut}(\mathcal{G})$ whose full description requires multiple non-commuting generators; full recovery is the subject of Section~\ref{sec:seqgevp}.}
\label{fig:graph_aut_2}
\end{figure*}

\subsection{Blind Chirp Rate Estimation}

For chirp-modulated signals, the optimal group is the conjugated cyclic group $\mathbb{Z}_M(\psi)$ with chirp parameter $\psi$. The DC-GEVP with a one-dimensional parametric basis $\{B(\psi) = \mathbf{U}(\psi)\mathbf{P}\mathbf{U}(\psi)^*\}$ reduces to a 1D sweep of $\lambda_{\min}(\psi)$. Figure~\ref{fig:chirp_sweep} shows the minimum eigenvalue as a function of $\psi$ for a chirp signal with true chirp rate $\psi_0 = 0.15$: the global minimum occurs at $\psi = \psi_0$ with $\lambda_{\min} = 0$, providing exact blind chirp rate estimation from the DC-GEVP certificate alone.

\begin{figure}[t]
\centering
\includegraphics[width=\columnwidth]{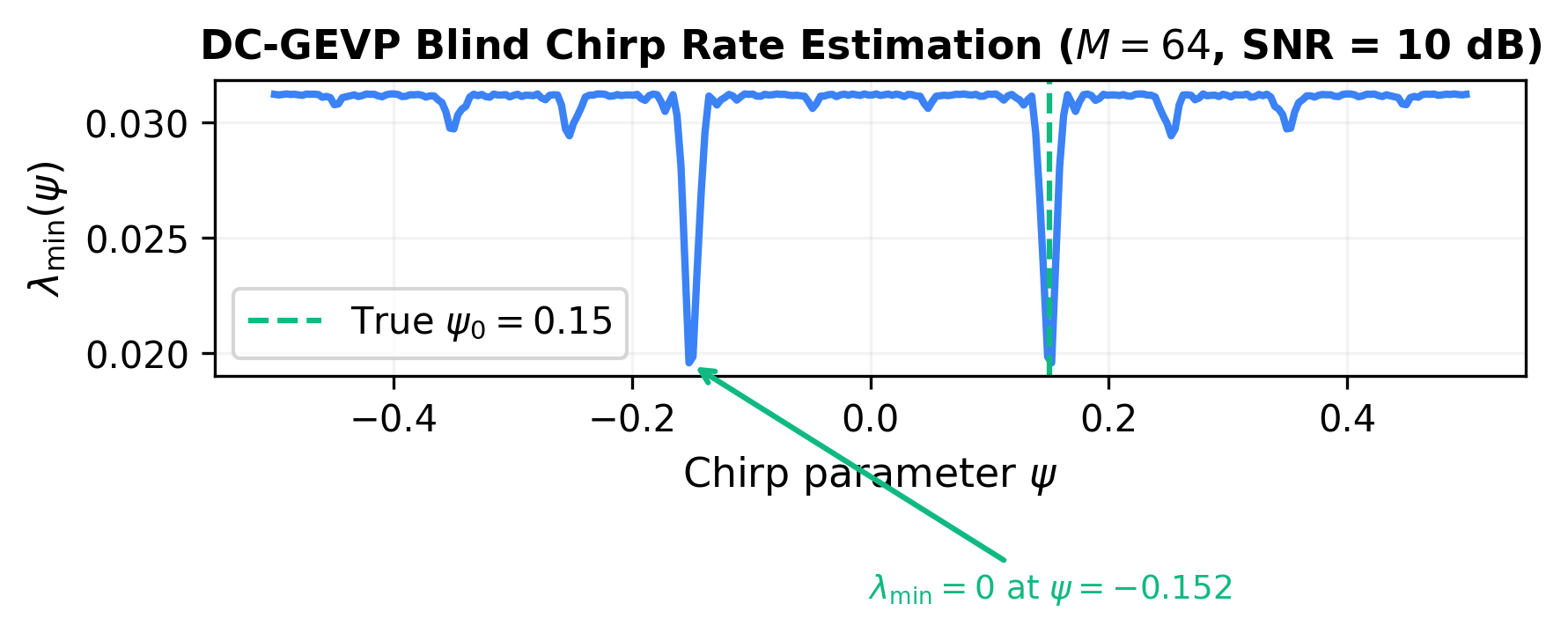}
\caption{DC-GEVP blind chirp rate estimation ($M = 64$, SNR $= 10$~dB). The minimum eigenvalue $\lambda_{\min}(\psi)$ of the DC-GEVP is plotted as a function of the chirp parameter $\psi$. The global minimum at $\psi = \psi_0 = 0.15$ confirms that the conjugated cyclic group with the correct chirp rate exactly commutes with the covariance (Proposition~\ref{prop:chirp}). The sharpness of the minimum indicates high sensitivity to chirp rate mismatch, enabling precise blind estimation.}
\label{fig:chirp_sweep}
\end{figure}

\section{Discussion}\label{sec:discussion}

\subsection{Spectral Relaxation Tightness}\label{sec:disc_tightness}

The double-commutator reduction reveals that optimal group selection, despite its group-theoretic combinatorial appearance, is fundamentally a spectral problem. The Rayleigh quotient~\eqref{eq:rayleigh} transforms a search over the (discrete, combinatorial) lattice of subgroups of $S_M$ into a search over a (continuous, smooth) $d$-dimensional vector space, where the minimum is attained at an eigenvector.

This reduction is reminiscent of spectral relaxations in combinatorial optimization (e.g., the Fiedler vector for graph partitioning, or semidefinite relaxations for MAX-CUT), but with a crucial difference: for the signal classes of primary practical interest (periodic, symmetric, chirp), the relaxation is \emph{tight}, the minimum eigenvalue is exactly zero and the minimum eigenvector exactly constructs the optimal generator (Propositions~\ref{prop:periodic}--\ref{prop:chirp}). Tightness failures (i.e., $\lambda_{\min} > 0$) are themselves informative: they indicate that no generator in the basis exactly matches the covariance structure, and the magnitude of $\lambda_{\min}$ quantifies the irreducible mismatch.

The computational cost of the double-commutator GEVP is dominated by the $O(d^2 M^2)$ matrix assembly, which is computed once per covariance structure. For typical applications where $d = 3$--$10$ (a small catalog of structural generators) and $M = 16$--$4096$ (sensor array or OFDM size), this is a one-time cost of microseconds to milliseconds, negligible relative to the data acquisition or subsequent signal processing.

\subsection{Identifiability Within the Commutant Lattice}\label{sec:disc_identifiability}

The Sequential GEVP of Section~\ref{sec:seqgevp} is sound: every accepted permutation belongs to $\mathrm{Aut}(\mathbf{R})$ at threshold $\tau = 0$. It is not, in general, complete: Example~\ref{ex:c6} exhibits a basis configuration in which Algorithm~\ref{alg:seqgevp} terminates with $G_K \subsetneq \mathrm{Aut}(\mathbf{R})$. To delimit this remaining gap and frame the open problems sharply, we record two further results in this subsection. The first (Theorem~\ref{thm:lattice_insens}) characterizes a class of selection criteria that cannot in principle discriminate within a commutant lattice. The second (Theorem~\ref{thm:gen_id}) characterizes when a Reynolds-projected covariance recovers its generative subgroup exactly versus when its commutant is strictly larger.

For a subgroup $G \subseteq S_M$, write $\mathcal{A}_G := \{X \in \mathbb{C}^{M \times M} : \mathbf{P}_g X = X \mathbf{P}_g \text{ for all } g \in G\}$ for the commutant of the permutation representation, and $\mathcal{P}_G : \mathbb{C}^{M \times M} \to \mathcal{A}_G$ for the orthogonal projector onto $\mathcal{A}_G$ in the Frobenius inner product. The projector admits the Reynolds expression
\begin{equation}\label{eq:reynolds}
\mathcal{P}_G(X) \;=\; \frac{1}{|G|}\sum_{g \in G} \mathbf{P}_g\, X\, \mathbf{P}_g^T.
\end{equation}

\begin{theorem}[Commutant-Lattice Insensitivity]\label{thm:lattice_insens}
Let $G_1 \subseteq G_2 \subseteq S_M$ be subgroups, and suppose $\mathbf{R} \in \mathcal{A}_{G_2}$. Then
\begin{enumerate}
\item[(i)] $\mathcal{A}_{G_1} \supseteq \mathcal{A}_{G_2}$, and consequently $\mathbf{R} \in \mathcal{A}_{G_1}$;
\item[(ii)] $\mathcal{P}_{G_1}(\mathbf{R}) = \mathcal{P}_{G_2}(\mathbf{R}) = \mathbf{R}$ identically;
\item[(iii)] any selection criterion $\Phi(G; \mathbf{R})$ that depends on $G$ only through the projection $\mathcal{P}_G(\mathbf{R})$ satisfies $\Phi(G_1; \mathbf{R}) = \Phi(G_2; \mathbf{R})$ identically, hence cannot discriminate $G_1$ from $G_2$.
\end{enumerate}
\end{theorem}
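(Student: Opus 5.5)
The argument is elementary and I would run it in the order (i), (ii), (iii), since each part feeds the next.

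For (i), take any $X \in \mathcal{A}_{G_2}$. By definition $\mathbf{P}_g X = X \mathbf{P}_g$ holds for every $g \in G_2$. Because $G_1 \subseteq G_2$, it holds in particular for every $g \in G_1$, so $X \in \mathcal{A}_{G_1}$. This gives $\mathcal{A}_{G_2} \subseteq \mathcal{A}_{G_1}$. Applying it to $X = \mathbf{R}$ gives $\mathbf{R} \in \mathcal{A}_{G_1}$. Nothing here uses that the $\mathbf{P}_g$ are permutation matrices, only that the commutant is defined by a family of constraints indexed by the group.

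For (ii), I would use the Reynolds expression~\eqref{eq:reynolds}. Permutation matrices are real orthogonal, so $\mathbf{P}_g^T = \mathbf{P}_g^{-1}$. If $\mathbf{R} \in \mathcal{A}_G$, then $\mathbf{P}_g \mathbf{R} \mathbf{P}_g^T = \mathbf{R}\mathbf{P}_g\mathbf{P}_g^T = \mathbf{R}$ for each $g \in G$, and averaging over $G$ gives $\mathcal{P}_G(\mathbf{R}) = \mathbf{R}$. Applying this with $G = G_2$ (by hypothesis) and with $G = G_1$ (by part (i)) gives $\mathcal{P}_{G_1}(\mathbf{R}) = \mathcal{P}_{G_2}(\mathbf{R}) = \mathbf{R}$.

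A cleaner route to (ii), which avoids the Reynolds formula altogether, is that an orthogonal projector fixes every element of its range. The only point to justify is that~\eqref{eq:reynolds} really is the Frobenius-orthogonal projector onto $\mathcal{A}_G$, as the paper asserts. If needed, I would verify this directly:
\begin{itemize}
\item \emph{Range lies in $\mathcal{A}_G$:} reindexing the sum by $g \mapsto hg$ shows that $\mathcal{P}_G(X)$ commutes with $\mathbf{P}_h$ for every $h \in G$.
\item \emph{Idempotent:} $\mathcal{P}_G$ fixes $\mathcal{A}_G$ by the computation in (ii), so $\mathcal{P}_G^2 = \mathcal{P}_G$.
\item \emph{Self-adjoint:} by cyclicity of trace, $\langle Y, \mathbf{P}_g X \mathbf{P}_g^T\rangle_F = \langle \mathbf{P}_g^T Y \mathbf{P}_g, X\rangle_F$, and the reindexing $g \mapsto g^{-1}$ then shows self-adjointness.
\end{itemize}

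For (iii), I would first make precise the meaning of ``depends on $G$ only through $\mathcal{P}_G(\mathbf{R})$''. Concretely, this means $\Phi(G;\mathbf{R}) = \phi(\mathcal{P}_G(\mathbf{R}), \mathbf{R})$ for some function $\phi$. Then (ii) gives $\Phi(G_1;\mathbf{R}) = \phi(\mathbf{R},\mathbf{R}) = \Phi(G_2;\mathbf{R})$. The only real obstacle in the whole proof is this formalization step, not any calculation. I would state the dependence hypothesis explicitly as the factorization through $\phi$, and note that it covers, for example, the projection residual $\|\mathbf{R} - \mathcal{P}_G(\mathbf{R})\|_F$ used in the structured-nearness discussion.
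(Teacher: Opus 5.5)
Your proof is correct and follows the paper's argument step for step. Part (i) restricts the commutation constraints from $G_2$ to $G_1$, part (ii) uses that the projector fixes every element of $\mathcal{A}_G$, and part (iii) factors $\Phi$ through $\mathcal{P}_G(\mathbf{R})$; your explicit $\phi(\mathcal{P}_G(\mathbf{R}),\mathbf{R})$ is equivalent to the paper's $\tilde{\Phi}$ because $\mathbf{R}$ is held fixed. The one nuance is that the paper \emph{defines} $\mathcal{P}_G$ as the orthogonal projector, so its (ii) is immediate, whereas your main computation $\mathbf{P}_g\mathbf{R}\mathbf{P}_g^T=\mathbf{R}$ needs the Reynolds formula to equal that projector. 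Your bullet-point verification correctly supplies this identity, which the paper only asserts.
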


\begin{proof}
(i) The commutant relation is order-reversing under inclusion: a matrix that commutes with every $\mathbf{P}_g$ for $g \in G_2$ also commutes with every $\mathbf{P}_g$ for $g \in G_1 \subseteq G_2$, so $\mathcal{A}_{G_2} \subseteq \mathcal{A}_{G_1}$. Since $\mathbf{R} \in \mathcal{A}_{G_2}$ by hypothesis, $\mathbf{R} \in \mathcal{A}_{G_1}$ as well.

(ii) The Reynolds projector $\mathcal{P}_G$ is the orthogonal projector onto $\mathcal{A}_G$, so $\mathcal{P}_G(X) = X$ exactly when $X \in \mathcal{A}_G$. By (i), $\mathbf{R} \in \mathcal{A}_{G_1}$ and $\mathbf{R} \in \mathcal{A}_{G_2}$, whence $\mathcal{P}_{G_1}(\mathbf{R}) = \mathbf{R}$ and $\mathcal{P}_{G_2}(\mathbf{R}) = \mathbf{R}$.

(iii) If $\Phi(G; \mathbf{R}) = \tilde{\Phi}(\mathcal{P}_G(\mathbf{R}))$ for some $\tilde{\Phi}$, then $\Phi(G_1; \mathbf{R}) = \tilde{\Phi}(\mathcal{P}_{G_1}(\mathbf{R})) = \tilde{\Phi}(\mathbf{R}) = \tilde{\Phi}(\mathcal{P}_{G_2}(\mathbf{R})) = \Phi(G_2; \mathbf{R})$.
\end{proof}

\begin{remark}[Why the commutator residual escapes Theorem~\ref{thm:lattice_insens}]\label{rem:residual_escapes}
The commutativity residual $\delta(\mathbf{A}, \mathbf{R}) = \|[\mathbf{A}, \mathbf{R}]\|_F / (\|\mathbf{A}\|_F \|\mathbf{R}\|_F)$ is not a function of $\mathcal{P}_G(\mathbf{R})$ alone: it depends on $\mathbf{R}$ directly through the commutator $[\mathbf{A}, \mathbf{R}]$, and on the candidate generator $\mathbf{A}$ rather than on the projector $\mathcal{P}_G$. A generator of $G_2 \setminus G_1$ is available as an $\mathbf{A}$-argument to $\delta$ but is not a generator of $G_1$, so $\delta$ can witness the difference between $G_1$ and $G_2$ even though any criterion of the form $\tilde\Phi(\mathcal{P}_G(\mathbf{R}))$ cannot. This asymmetry is precisely why the DC-GEVP of Theorem~\ref{thm:dc_reduction}, and the Sequential GEVP of Section~\ref{sec:seqgevp} built on it, can in principle discriminate within a commutant lattice that projection-based criteria cannot. The remaining gap, between what $\delta$ can witness and what Algorithm~\ref{alg:seqgevp} actually recovers from a given basis $\mathcal{B}$, is the basis-design open problem (O2) recorded below.
\end{remark}

The second result of this subsection isolates a distinct identifiability gap arising at the level of the data itself, rather than at the level of the criterion.

\begin{theorem}[Generative-Model Identifiability Dichotomy]\label{thm:gen_id}
Let $W \in \mathbb{R}^{M \times M}$ be a Hermitian random matrix with absolutely continuous distribution, and let $G^* \subseteq S_M$ be a fixed subgroup. Define
\begin{equation}\label{eq:reynolds_construct}
\mathbf{R} \;:=\; \mathcal{P}_{G^*}(W) \;=\; \frac{1}{|G^*|}\sum_{g \in G^*} \mathbf{P}_g\, W\, \mathbf{P}_g^T,
\end{equation}
so that $\mathbf{R} \in \mathcal{A}_{G^*}$ by construction. Then exactly one of the following holds.
\begin{enumerate}
\item[(i)] \textbf{Identifiable case.} For every supergroup $H \supsetneq G^*$ in $S_M$, there exists at least one $G^*$-orbit on $\{1,\ldots,M\}^2$ that is not preserved by $H$. In this case, $\mathrm{Aut}(\mathbf{R}) = G^*$ almost surely in $W$.
\item[(ii)] \textbf{Ambiguous case.} There exists a supergroup $H \supsetneq G^*$ that preserves every $G^*$-orbit on $\{1,\ldots,M\}^2$. In this case, $\mathcal{A}_{G^*} = \mathcal{A}_H$ as $\mathbb{R}$-vector subspaces of $\mathbb{R}^{M \times M}$, the construction~\eqref{eq:reynolds_construct} produces the same $\mathbf{R}$ for both groups, and $\mathrm{Aut}(\mathbf{R}) \supseteq H \supsetneq G^*$ deterministically in every realization of $W$.
\end{enumerate}
\end{theorem}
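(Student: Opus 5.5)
The plan rests on one structural fact: the commutant $\mathcal{A}_G$ of a permutation representation is spanned by orbital indicator matrices. Concretely, $\mathbf{P}_g X \mathbf{P}_g^T = X$ reads entrywise as $X_{g(i),g(j)} = X_{ij}$. Hence $X \in \mathcal{A}_G$ iff $X$ is constant on every $G$-orbit $\mathcal{O} \subseteq \{1,\ldots,M\}^2$, and $\{\mathbf{1}_{\mathcal{O}}\}$ is a Frobenius-orthogonal basis of $\mathcal{A}_G$. The Reynolds formula~\eqref{eq:reynolds} then becomes explicit:
\[
\mathcal{P}_{G^*}(W) = \sum_{\mathcal{O}} r_{\mathcal{O}}(W)\,\mathbf{1}_{\mathcal{O}}, \qquad r_{\mathcal{O}}(W) = \tfrac{1}{|\mathcal{O}|}\sum_{(i,j)\in\mathcal{O}} W_{ij}.
\]
Each $r_{\mathcal{O}}$ is a linear functional of $W$.

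\emph{Exhaustiveness and case (ii).} That exactly one case holds is a matter of logic: (ii) is the literal negation of the hypothesis of (i). For (ii), suppose $H \supsetneq G^*$ preserves every $G^*$-orbital. Then every $\mathbf{1}_{\mathcal{O}}$ is $H$-invariant, so $\mathcal{A}_{G^*} \subseteq \mathcal{A}_H$. Theorem~\ref{thm:lattice_insens}(i) gives the reverse inclusion, so $\mathcal{A}_{G^*} = \mathcal{A}_H$. Equal subspaces have the same orthogonal projector, so $\mathcal{P}_{G^*}(W) = \mathcal{P}_H(W)$ for every $W$. Since $\mathbf{R} \in \mathcal{A}_H$, every $h \in H$ commutes with $\mathbf{R}$, so $H \subseteq \mathrm{Aut}(\mathbf{R})$ deterministically.

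\emph{Case (i).} First, $G^* \subseteq \mathrm{Aut}(\mathbf{R})$ always holds. For the reverse inclusion, fix $\sigma \notin G^*$ and set $H = \langle G^*, \sigma\rangle \supsetneq G^*$. By hypothesis some $G^*$-orbital $\mathcal{O}$ is not $H$-invariant. Because $G^*$ preserves $\mathcal{O}$ and $\sigma$ has finite order, it follows that $\sigma(\mathcal{O}) \neq \mathcal{O}$. Since $|\sigma(\mathcal{O})| = |\mathcal{O}|$, the set $\sigma(\mathcal{O})$ must meet some other orbital $\mathcal{O}' \neq \mathcal{O}$. If $\sigma \in \mathrm{Aut}(\mathbf{R})$, then $\mathbf{R}$ is constant on $\sigma$-images, which forces $r_{\mathcal{O}}(W) = r_{\mathcal{O}'}(W)$. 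This is a nontrivial linear equation on $W$ provided $r_{\mathcal{O}} - r_{\mathcal{O}'}$ is not identically zero on the space of admissible $W$. In that case the event has measure zero under an absolutely continuous law. A union over the finitely many $\sigma \in S_M \setminus G^*$ then gives $\mathrm{Aut}(\mathbf{R}) = G^*$ almost surely.

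\emph{Main obstacle.} The expected difficulty is the nondegeneracy of $r_{\mathcal{O}} - r_{\mathcal{O}'}$ under the symmetry constraint $W = W^T$ imposed by "Hermitian, real." For symmetric $W$ one has $r_{\mathcal{O}^T} = r_{\mathcal{O}}$ identically. So if $\sigma$ only ever maps an orbital into its transpose orbital, the linear equation is vacuous and $\sigma$ commutes with every such $\mathbf{R}$. I would first check whether this can occur, expecting that it can, e.g.\ for some non-self-paired $G^*$.

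The fix I would adopt is to read ``$G^*$-orbit on $\{1,\ldots,M\}^2$'' as the symmetrized orbital $\mathcal{O} \cup \mathcal{O}^T$, which is what the commutant restricted to symmetric matrices actually sees. The same orthogonal-basis argument goes through for the symmetric commutant, and case (ii) is unaffected. With this reading, distinct symmetrized orbitals are disjoint sets of unordered index pairs, so $r_{\mathcal{O}}$ and $r_{\mathcal{O}'}$ average disjoint sets of free coordinates of $W$. Their difference is then a nonzero functional, and the measure-zero argument closes. If the literal ordered-pair reading is intended, I would instead record the transpose-pairing as an explicit extra hypothesis.
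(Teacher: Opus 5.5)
Your argument follows the paper's route: $\mathbf{R}$ is constant on $G^*$-orbitals, non-preservation passes from $H=\langle G^*,\sigma\rangle$ to $\sigma$, each $\sigma\notin G^*$ gives one linear equation, a finite union bound finishes case (i), and case (ii) rests on equality of commutants. Your explicit coefficients $r_{\mathcal{O}}$ expose the step the paper skips. The paper asserts that $\mathbf{R}|_O=\mathbf{R}|_{O'}$ ``defines a measure-zero hyperplane'' without checking that the functional is nonzero, and for real symmetric $W$ it can vanish identically, as you suspected. Take $M=3$ and $G^*=\langle(1\,2\,3)\rangle$, with orbitals $O_0$ (the diagonal), $O_1=\{(1,2),(2,3),(3,1)\}$ and $O_2=O_1^T$. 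The only proper supergroup is $S_3$, and $(1\,2)$ sends $(1,2)\in O_1$ to $(2,1)\in O_2$, so the literal hypothesis of (i) holds. Yet $r_{O_1}=r_{O_2}$ for symmetric $W$, so $\mathbf{R}=r_{O_0}\mathbf{I}+r_{O_1}(\mathbf{1}\mathbf{1}^T-\mathbf{I})$ and $\mathrm{Aut}(\mathbf{R})=S_3$ in every realization. The paper's own periodic setting fails the same way. For $G^*=\mathbb{Z}_M$ with $M\ge 3$, the orbital $\{(i,i+1)\}$ is the directed $M$-cycle, whose automorphism group is $\mathbb{Z}_M$, so (i) holds literally. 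But every real symmetric circulant commutes with the reflection $i\mapsto -i$, so $\mathrm{Aut}(\mathbf{R})\supseteq D_M$ deterministically. The statement is false as written, so repairing it, as you propose, is the right move.

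Your symmetrized-orbital repair of (i) is correct, but case (ii) is not unaffected by it. Under that reading the hypothesis of (ii) is weaker, and its first conclusion fails. In the example above, $S_3$ preserves both symmetrized orbitals, yet $\mathcal{A}_{\mathbb{Z}_3}$ (all real circulants, dimension $3$) strictly contains $\mathcal{A}_{S_3}=\mathrm{span}\{\mathbf{I},\mathbf{1}\mathbf{1}^T\}$. What you can prove is $\mathcal{A}_{G^*}\cap\mathrm{Sym}=\mathcal{A}_H\cap\mathrm{Sym}$. Since $\mathcal{P}_G(X^T)=\mathcal{P}_G(X)^T$, the restriction of $\mathcal{P}_G$ to symmetric matrices is the orthogonal projector onto $\mathcal{A}_G\cap\mathrm{Sym}$. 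That still gives $\mathcal{P}_{G^*}(W)=\mathcal{P}_H(W)$ and $H\subseteq\mathrm{Aut}(\mathbf{R})$, so the other two conclusions of (ii) survive. You also cannot use the symmetrized reading in (i) and the ordered reading in (ii). For $\mathbb{Z}_3\subset S_3$ neither hypothesis would then hold, and the ``exactly one'' claim breaks. If you prefer to keep ordered orbitals and the statement's wording, change the model instead:
\begin{itemize}
\item With $W$ absolutely continuous on all of $\mathbb{R}^{M\times M}$ (dropping symmetry), distinct ordered orbitals average disjoint free coordinates, and your proof goes through verbatim.
\item With complex Hermitian $W$ it also works, because $r_{\mathcal{O}^T}=\overline{r_{\mathcal{O}}}$ and $\mathrm{Im}\,r_{\mathcal{O}}=0$ is a nontrivial equation whenever $\mathcal{O}\neq\mathcal{O}^T$.
\end{itemize}
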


\begin{proof}
By construction, $\mathbf{R} \in \mathcal{A}_{G^*}$. A permutation $\sigma \in S_M$ commutes with $\mathbf{R}$ if and only if $\mathbf{P}_\sigma \mathbf{R} \mathbf{P}_\sigma^T = \mathbf{R}$ entrywise.

\textit{Case (i).} The entries of $\mathbf{R}$ are constant on $G^*$-orbits in $\{1,\ldots,M\}^2$ for every realization of $W$, because $\mathbf{R} \in \mathcal{A}_{G^*}$ is invariant under the diagonal action $g \cdot (i,j) = (g(i), g(j))$. Suppose $\sigma \notin G^*$ commutes with $\mathbf{R}$, and let $H := \langle G^*, \sigma\rangle \supsetneq G^*$. Since $G^*$ trivially preserves its own orbits and orbit-preservation is closed under composition and inversion, $\sigma$ preserves every $G^*$-orbit if and only if $H$ does. By the case-(i) hypothesis, $H$ does not preserve every $G^*$-orbit, so $\sigma$ does not either: there exist $G^*$-orbits $O \neq O'$ in $\{1,\ldots,M\}^2$ and an index pair $(i,j) \in O$ with $(\sigma(i), \sigma(j)) \in O'$. The commutation requirement $\mathbf{R}_{i,j} = \mathbf{R}_{\sigma(i),\sigma(j)}$ then reads $\mathbf{R}|_O = \mathbf{R}|_{O'}$, a single linear equation in the entries of $W$ that defines a measure-zero hyperplane in the absolutely continuous distribution of $W$. Taking the union over the finitely many $\sigma \in S_M \setminus G^*$ keeps the failure event measure zero, so $\mathrm{Aut}(\mathbf{R}) = G^*$ almost surely in $W$.

\textit{Case (ii).} If $H \supsetneq G^*$ preserves every $G^*$-orbit on $\{1,\ldots,M\}^2$, then any function constant on $G^*$-orbits is automatically constant on $H$-orbits, because each $H$-orbit decomposes into a union of $G^*$-orbits all carrying the same value. Hence $\mathcal{A}_{G^*} = \mathcal{A}_H$ as $\mathbb{R}$-subspaces of $\mathbb{R}^{M \times M}$, and the Reynolds projection~\eqref{eq:reynolds_construct} yields the same $\mathbf{R}$ for both groups. Therefore $\mathbf{P}_h \mathbf{R} = \mathbf{R} \mathbf{P}_h$ for every $h \in H$, giving $\mathrm{Aut}(\mathbf{R}) \supseteq H$ deterministically.
\end{proof}

\begin{remark}[Connection to the experimental validation of Section~\ref{sec:experiments}]\label{rem:exp_connection}
Theorem~\ref{thm:lattice_insens} explains why the single-generator DC-GEVP experiments of Section~\ref{sec:experiments} (Figures~\ref{fig:graph_aut_1} and~\ref{fig:graph_aut_2}) demonstrate at most that the discovered generator is consistent with $\mathrm{Aut}(\mathcal{G})$, not that the procedure has identified $\mathrm{Aut}(\mathcal{G})$ in full: the underlying covariance $\mathbf{R} = e^{-\beta\mathbf{L}}$ lies in $\mathcal{A}_{\mathrm{Aut}(\mathcal{G})}$ by the Automorphism Characterization Theorem~\cite{thornton2026ad}, hence in $\mathcal{A}_G$ for every $G \subseteq \mathrm{Aut}(\mathcal{G})$, and any criterion operating only on the Reynolds projection of $\mathbf{R}$ collapses across that subgroup chain. The single-generator DC-GEVP returns one element of $\mathrm{Aut}(\mathcal{G})$ per graph; recovery of the entire $\mathrm{Aut}(\mathcal{G})$ requires the Sequential GEVP of Section~\ref{sec:seqgevp}, with the soundness-without-completeness limitation recorded as Theorem~\ref{thm:gen_conv} and Example~\ref{ex:c6}.
\end{remark}

\subsection{Open Problems}\label{sec:disc_opens}

Theorems~\ref{thm:lattice_insens} and~\ref{thm:gen_id} delimit what is and is not in principle recoverable from $\mathbf{R}$ alone. Within those bounds, the following four problems remain open.

\begin{itemize}
\item[(O1)] \textbf{Generative-model identifiability.} For each candidate target group $G^* \subseteq S_M$ in a given application, characterize whether case (i) or case (ii) of Theorem~\ref{thm:gen_id} applies. The dichotomy itself is established by Theorem~\ref{thm:gen_id}; what remains open is the explicit combinatorial characterization for the wreath, semidirect, and Cartesian-product families that arise in practice. A combinatorial test of orbit-pair preservation for those families would resolve the question case-by-case and identify whether $G^*$ is recoverable from a single Reynolds-projected $\mathbf{R}$ or whether only a strict supergroup is recoverable.

\item[(O2)] \textbf{Basis-design conditions.} Find sufficient conditions on the basis $\mathcal{B}$ (cardinality, structure, alignment with $\mathrm{Aut}(\mathbf{R})$) under which Algorithm~\ref{alg:seqgevp} at $\tau = 0$ discovers all of $\mathrm{Aut}(\mathbf{R})$ rather than a strict subgroup. Example~\ref{ex:c6} demonstrates that the interaction between deflation and Hungarian rounding can cause the rounded permutation $\sigma^*_{k+1}$ to fall outside $\mathrm{Aut}(\mathbf{R})$ even when the deflation residual $\mathbf{A}^*_{k+1}$ has $\delta(\mathbf{A}^*_{k+1},\mathbf{R}) = 0$, prematurely terminating the procedure. A Restricted Commutativity Property (RCP) condition on $\mathcal{B}$, by analogy with the Restricted Isometry Property in compressed sensing~\cite{candes2005decoding}, would be of independent mathematical interest and would resolve the algorithmic gap noted in Remark~\ref{rem:residual_escapes}.

\item[(O3)] \textbf{Noise robustness at $\tau > 0$.} Characterize the false-positive rate (permutations admitted at threshold $\tau$ that are not in $\mathrm{Aut}(\mathbf{R})$) as a function of $\tau$, the SNR of an additive isotropic perturbation of $\mathbf{R}$, and $|\mathrm{Aut}(\mathbf{R})|$, with a quantitative tail bound. The single-generator DC-GEVP at finite SNR admits a Davis--Kahan-type perturbation analysis on the eigenvector $\mathbf{c}^*$; an analogous analysis for the sequential procedure at threshold $\tau > 0$ would underpin a principled threshold-selection rule.

\item[(O4)] \textbf{Joint multi-generator extension.} Algorithm~\ref{alg:seqgevp} is greedy: it picks one generator at a time and deflates. An alternative is to optimize over multi-dimensional subspaces of $\mathrm{span}(\mathcal{B})$ simultaneously, using the second-smallest GEVP eigenpair, or a tensor generalization that solves for a multi-generator subspace in a single step. Whether such an extension improves recovery of non-Abelian groups in low-SNR regimes, and whether it admits a closed-form analogue of Theorem~\ref{thm:dc_reduction}, are open.
\end{itemize}

\section{Conclusion}\label{sec:conclusion}

We have proved that optimal group selection for algebraic diversity reduces from a combinatorial search over subgroups of $S_M$ to a $d \times d$ generalized eigenvalue problem derived from the double commutator of the covariance matrix. The reduction is polynomial-time, closed-form, certifiable, and exact for the principal structured signal classes. The problem is new to the computational complexity literature, sitting at the intersection of group theory, matrix analysis, and statistical estimation. The double-commutator formulation is unique among all quadratic commutativity measures in yielding a standard eigenvalue problem, and it subsumes JADE, structured matrix nearness, and simultaneous diagonalization as special cases. For non-Abelian symmetry groups, the Sequential GEVP of Section~\ref{sec:seqgevp} extends the single-generator construction via group-theoretic deflation, with four named correctness results that bound the recovered subgroup and the number of iterations; soundness ($G_K \subseteq \mathrm{Aut}(\mathbf{R})$) is unconditional, while completeness ($G_K = \mathrm{Aut}(\mathbf{R})$) is established when $\mathrm{Aut}(\mathbf{R}) = S_M$ and is open in the proper-subgroup case (Example~\ref{ex:c6}, open problem (O2)).

Two further results characterize the identifiability landscape within which any group-selection procedure must operate. The Commutant-Lattice Insensitivity Theorem (Theorem~\ref{thm:lattice_insens}) shows that any selection criterion operating only on the Reynolds projection of $\mathbf{R}$ cannot discriminate between $G_1 \subseteq G_2$ when $\mathbf{R} \in \mathcal{A}_{G_2}$; criteria of that form collapse identically across the entire commutant lattice rooted at $\mathbf{R}$. The commutativity residual $\delta(\mathbf{A}, \mathbf{R})$ escapes this insensitivity (Remark~\ref{rem:residual_escapes}), which is the population-level reason the DC-GEVP and its sequential extension can in principle discriminate within a commutant lattice that projection-based criteria cannot. The Generative-Model Identifiability Dichotomy (Theorem~\ref{thm:gen_id}) characterizes, when $\mathbf{R}$ is the Reynolds projection of a random Hermitian $W$ associated to a generative subgroup $G^* \subseteq S_M$, whether $\mathrm{Aut}(\mathbf{R})$ recovers $G^*$ exactly or strictly contains a supergroup, in terms of the orbit-pair structure of $G^*$ on $\{1,\ldots,M\}^2$. Together these two theorems delimit what is and is not in principle recoverable from $\mathbf{R}$ alone, and isolate the residual algorithmic gap into the four open problems (O1)--(O4) of Section~\ref{sec:disc_opens}.

The broader implication is that the algebraic diversity framework, which generalizes temporal averaging to group-structured single-observation estimation~\cite{thornton2026ad,thornton2026framework_arxiv}, has a polynomial-time end-to-end pipeline: both the estimation step (the group-averaged estimator, proved to be the MLE achieving the Cram\'er--Rao bound for the Gaussian outer-product case) and the model selection step (optimal generator selection via the DC-GEVP, with multi-generator extension via Algorithm~\ref{alg:seqgevp}) are polynomial-time, making the framework practical for real-time signal processing. The Trivial Group Embedding Theorem~\cite{thornton2026ad} establishes that conventional temporal averaging is the degenerate case of this pipeline with $G = \{e\}$, $d_{\mathrm{eff}} = 1$; the double-commutator GEVP provides the mechanism for moving beyond the trivial group to exploit the full structural capacity $\kappa(\mathbf{R}) = 1 + 1/\Tr(\mathbf{R}^2/\|\mathbf{R}\|_1^2)$ of the signal, with the identifiability limits of Theorems~\ref{thm:lattice_insens} and~\ref{thm:gen_id} fixing the boundary of what any procedure within the framework can in principle achieve.

\end{document}